\newcommand\blfootnote[1]{%
  \begingroup
  \renewcommand\thefootnote{}\footnote{#1}%
  \addtocounter{footnote}{-1}%
  \endgroup
}
\title{An Analysis of Frame-skipping in Reinforcement Learning}
\author{Shivaram  Kalyanakrishnan}
\affiliation{
  \institution{IIT Bombay}}
\email{shivaram@cse.iitb.ac.in}
\author{Siddharth Aravindan *}
\affiliation{
  \institution{National University of Singapore}}
\email{siddharth.aravindan@comp.nus.edu.sg}
\author{Vishwajeet  Bagdawat *}
\affiliation{
  \institution{Panasonic Corporation}}
\email{vishwajeet.singh@jp.panasonic.com}
\author{Varun Bhatt *}
\affiliation{
  \institution{University of Alberta}}
\email{vbhatt@ualberta.ca}
\author{Harshith  Goka *}
\affiliation{
  \institution{Samsung Research}}
\email{h9399.goka@samsung.com}
\author{Archit Gupta *}
\affiliation{
  \institution{Rubrik}}
\email{archit.gupta@rubrik.com}
\author{Kalpesh Krishna *}
\affiliation{
  \institution{UMass Amherst}}
\email{kalpesh@cs.umass.edu}
\author{Vihari Piratla *}
\affiliation{
  \institution{IIT Bombay}}
\email{viharipiratla@gmail.com}
\begin{abstract}
In the practice of sequential decision making, agents are often designed to sense state at regular intervals of $d$ time steps, $d > 1$, ignoring state information in between sensing steps. While it is clear that this practice can reduce sensing and compute costs, recent results indicate a further benefit. On many Atari console games, reinforcement learning (RL) algorithms deliver substantially better policies when run with $d > 1$---in fact with $d$ even as high as $180$. In this paper, we investigate the role of the parameter $d$ in RL; $d$ is called  the ``frame-skip'' parameter, since states in the Atari domain are images. For evaluating a fixed policy, we observe that under standard conditions, frame-skipping does not affect asymptotic consistency. Depending on other  parameters, it can possibly even benefit learning. To use $d > 1$ in the control setting, one must first specify which $d$-step open-loop action sequences can be executed in between sensing steps. We focus on ``action- repetition'', 
the common restriction of this choice to $d$-length sequences of the same action. We define a task-dependent quantity called the ``price of inertia'', in terms of which we upper-bound the loss incurred by action-repetition. We show that this loss may be offset by the gain brought to learning by a smaller task horizon. Our analysis is supported by experiments on different tasks and learning algorithms. 

\blfootnote{* Equal contribution}

\thanks{Equal contribution}
\end{abstract}
\keywords{Reinforcement Learning, TD Learning, Frame-skipping}
\DeclareMathOperator*{\argmax}{argmax}
\DeclareMathOperator*{\argmin}{argmin}
\DeclareMathOperator*{\eqdef}{\stackrel{\text{\tiny def}}{=}}
\newtheorem{theorem}{Theorem}
\newtheorem{proposition}[theorem]{Proposition}
\newtheorem{lemma}[theorem]{Lemma}
\begin{document}

%%% The following commands remove the headers in your paper. For final 
%%% papers, these will be inserted during the pagination process.

\pagestyle{fancy}
\fancyhead{}

%%% The next command prints the information defined in the preamble.

\maketitle

\section{Introduction}
\label{sec:introduction}
Sequential decision making tasks are most commonly formulated as Markov Decision Problems~\cite{Puterman:1994}. An MDP models a world with state transitions that depend on the action an agent may choose. Transitions also yield rewards. Every MDP is guaranteed to have an optimal \textit{policy}: a state-to-action mapping 
that maximises expected long-term reward~\cite{Bellman:1957}. Yet, on a given task, it might not be \textit{necessary} to sense state at each time step in order to optimise performance. For example, even if the hardware allows a car
to sense state and select actions every millisecond, it might suffice on typical roads to do so once every ten milliseconds. The reduction in reaction time by so doing might have a negligible effect on performance, and be justified by the substantial savings in  sensing and computation.

Recent empirical studies 
bring to light a less obvious benefit from reducing the frequency of sensing: sheer 
improvements in performance when behaviour is \textit{learned}~\cite{Braylan+HMM:2015,durugkar2016deep,Lakshminarayanan+SR:2017}. On the popular Atari console games benchmark~\cite{Bellemare+NVB:2014}  for reinforcement learning (RL), reduced sensing takes the form of ``frame-skipping'', since agents in this domain sense image frames and respond with actions. In the original implementation, sensing is limited to every $4$-th frame, with the intent of lightening the computational load~\cite{mnih2015human}. However, subsequent research has shown that higher performance levels can be reached by skipping up to $180$ frames in some games~\cite{Braylan+HMM:2015}.

We continue to use the term ``frame-skipping'' generically across all sequential decision making tasks, denoting by parameter $d \geq 1$ the number of time steps between sensing steps (so $d = 1$ means no frame-skipping). For using $d > 1$, observe that it is necessary to specify an entire sequence of actions, to execute in an open-loop fashion, in between sensed frames. The most common strategy for so doing is ``action-repetition'', whereby the same atomic action is repeated $d$ times. Action-repetition has been the default strategy for implementing frame-skipping on the Atari console games, both when $d$ is treated as a hyperparameter~\cite{mnih2015human,Braylan+HMM:2015} and when it is adapted on-line, during the agent's lifetime~\cite{durugkar2016deep,Lakshminarayanan+SR:2017,Sharma+LR:2017}.

In this paper, we analyse the role of frame-skipping and action-repetition in RL---in short, examining \textit{why} they work. We begin by surveying topics in sequential decision making that share connections with frame-skipping and action-repetition (Section~\ref{sec:literaturesurvey}). Thereafter we provide formal problem definitions in Section~\ref{sec:problemdefinition}. In Section~\ref{sec:predictionwithframe-skipping}, we take up the problem of \textit{prediction}: estimating the value function of a fixed policy. We show that prediction with frame-skipping continues to give consistent estimates when used with linear function approximation. Additionally, $d$ serves as a handle to simultaneously tune the amount of bootstrapping and the task horizon. In Section~\ref{sec:controlwithaction-repetition}, we investigate the \textit{control} setting, wherein behaviour is adapted based on experience. First we define a task-specific quantity called the ``price of inertia'', in terms of which we bound the loss incurred by action-repetition. Thereafter we show that frame-skipping might still be beneficial in aggregate because it reduces the effective task horizon. In Section~\ref{sec:empiricalevaluation}, we augment our analysis with empirical findings on different tasks and learning algorithms. Among our results is a successful demonstration of learning \textit{defensive} play in soccer, 
a hitherto less-explored side of the game~\cite{ALA16-hausknecht}. We conclude with a summary in Section~\ref{sec:conclusion}.

\section{Literature Survey}
\label{sec:literaturesurvey}

Frame-skipping may be viewed as an instance of (partial) \textit{open-loop control}, under which a predetermined sequence of (possibly different) actions is executed without heed to intermediate states. Aiming to minimise sensing, Hansen \textit{et al.}~\cite{hansen1997reinforcement} propose a framework for incorporating variable-length open-loop action sequences in regular (closed-loop) control. The primary challenge in general open-loop control is that the number of action sequences of some given length $d$ is exponential in $d$. Consequently, the main focus in the area is on strategies to prune corresponding data structures~\cite{Tan:1991,McCallum:1996-thesis,hansen1997reinforcement}. Since action repetition restricts itself to a set of actions with size \textit{linear} in $d$, it allows for $d$ itself to be set much higher in practice~\cite{Braylan+HMM:2015}. 

%In Section~\ref{sec:inducedmdpsandanalysis}, we show that on finite MDPs, AR can handle even infinite $d$ with a \textit{finite} memory footprint---a theoretical result that appears to establish a clear divide between AR and general open-loop control.

To the best of our knowledge, the earliest treatment of action-repetition in the form we consider here is by Buckland and Lawrence \cite{buckland1994transition}. While designing agents to negotiate a race track, these authors note that successful controllers need only change actions at ``transition points'' such as curves, while repeating the same action for long stretches. They propose an algorithmic framework that only keeps transition points in memory, thereby achieving savings. In spite of limitations such as the assumption of a discrete state space, their work provides conceptual guidance for later work. For example, Buckland ~\cite[see Section 3.1.4]{Buckland:1994-thesis} informally discusses ``inertia'' as a property determining a task's suitability to action repetition---and we formalise the very same concept in Section~\ref{sec:controlwithaction-repetition}.

Investigations into the effect of action repetition on \textit{learning} begin with the work of McGovern \textit{et al.}~\cite{McGovern+SF:1997}, who identify two qualitative benefits: improved exploration (also affirmed by Randl{\o}v~\cite{randlov1999learning}) and the shorter resulting task horizon. While these inferences are drawn from experiments on small, discrete, tasks, they find support in a recent line of experiments on the Atari environment, in which neural networks are used as the underlying representation~\cite{durugkar2016deep}. In the original implementation of the DQN algorithm on Atari console games, actions are repeated $4$ times, mainly to reduce computational load~\cite{mnih2015human}. However, subsequent research has shown that higher performance levels can be reached by persisting actions for longer---up to 180 frames in some games~\cite{Braylan+HMM:2015}. More recently, Lakshminarayanan \textit{et al.}~\cite{Lakshminarayanan+SR:2017} propose a policy gradient approach to optimise $d$ (fixed to be either 4 or 20) on-line. Their work sets up the FiGAR (Fine Grained Action Repetition) algorithm~\cite{Sharma+LR:2017}, which optimises over a wider range of $d$ and achieves significant improvements in many Atari games. It is all this empirical evidence in favour of action repetition that motivates our quest for a theoretical understanding.

The idea that ``similar'' states will have a common optimal action also forms the basis for \textit{state aggregation}, a process under which such states are grouped together~\cite{Li+WL:2006}. In practice, state aggregation usually requires domain knowledge, which restricts it to low-dimensional settings~\cite{silva2009compulsory,dazeleycoarse}. Generic, theoretically-grounded state-aggregation methods are even less practicable, and often validated on tasks with only a handful of states~\cite{Abel+ALL:2018}. In contrast, action-repetition applies the principle that states that occur close in \textit{time} are likely to be similar. Naturally this rule of thumb is an approximation, but one that remains applicable in higher-dimensional settings (the HFO defense task in Section~\ref{sec:empiricalevaluation} has 16 features).

Even on tasks that favour action-repetition, it could be beneficial to
explicitly reason about the ``intention'' of an action, such as to reach a particular state. This type of \textit{temporal abstraction} is formalised as an \textit{option}~\cite{sutton1999between}, which is a closed-loop policy with initial and terminating constraints. As numerous experiments show, action-repetition performs well on a variety of tasks in spite of being open-loop in between decisions. We expect options to be more effective when the task at hand requires explicit skill discovery~\cite{Konidaris:2016}.

In this paper, we take the frame-skip parameter $d$ as discrete, fixed, and known to the agent. Thus frame-skipping differs from Semi-Markov Decision Problems~\cite{Bradtke+Duff:1995}, in which the duration of actions can be continuous, random, and unknown. In the specific context of temporal difference learning, frame-skipping may both be interpreted as a technique to control bootstrapping~\cite{Sutton+Barto:2018}[see Section 6.2] and one to reduce the task horizon~\cite{petrik2009biasing}.

\section{Problem Definition}
\label{sec:problemdefinition}

We begin with background on MDPs, and thereafter formalise the prediction and control problems with frame-skipping.

\subsection{Background: MDPs}
\label{subsec:background:mdps}

A Markov Decision Problem (MDP) $M = (S, A, R, T, \gamma)$
comprises a set of states $S$ and a set of actions $A$. Taking action $a \in A$ from  state $s \in S$ yields a numeric reward with expected value $R(s, a)$, which is bounded in $[-R_{\max}, R_{\max}]$ for some $R_{\max} > 0$. $R$ is the reward function of $M$. The transition function $T$
specifies a probability distribution over $S$: for each
$s^{\prime} \in S$, $T(s, a, s^{\prime})$ is the probability of reaching $s^{\prime}$ by taking action $a$ from $s$. An agent is assumed to interact with $M$ over time, starting at some state. At each time step the agent must decide
which action to take. The action yields a next state drawn stochastically according to $T$ and a reward according to $R$, resulting in a state-action-reward sequence $s_{0}, a_{0}, r_{0}, s_{1}, a_{1}, r_{1}, s_{2}, \dots$. The natural objective of the agent is to maximise some notion of expected long term reward, which we take here to be $\mathbb{E}[r_{0} + \gamma r_{1} + \gamma^{2} r_{2} + \dots],$ where $\gamma \in [0, 1]$ is a discount factor. 
We assume $\gamma < 1$ unless the task encoded by $M$ is \textit{episodic}: that is, all policies eventually reach a terminal state with probability $1$.

A policy $\pi: S \times A \to [0, 1]$, specifies for each $s \in S$, a probability $\pi(s, a)$ of taking action $a \in A$ (hence $\sum_{a \in A} \pi(s, a) = 1$). If an agent takes actions according to such a policy $\pi$ (by our definition, $\pi$ is Markovian and stationary), the expected long-term reward accrued starting at state $s \in S$ is denoted $V^{\pi}(s)$; $V^{\pi}$ is the \textit{value function} of $\pi$. Let $\Pi$ be the set of all policies. It is a well-known result that for every MDP $M$, there is an \textit{optimal policy} $\pi^{\star} \in \Pi$ such that for all $s \in S$ and $\pi \in \Pi$, $V^{\pi^{\star}}(s) \geq V^{\pi}(s)$~\cite{Bellman:1957} (indeed there is always a deterministic policy that satisfies optimality).

In the reinforcement learning (RL) setting, an agent interacts with an MDP by sensing state and receiving rewards, in turn specifying actions to influence its future course. In the \textit{prediction} setting, the agent follows a fixed policy $\pi$, and is asked to estimate the value function $V^{\pi}$. Hence, for prediction, it suffices to view the agent as interacting with a Markov Reward Process (MRP) (an MDP with decisions fixed by $\pi$). In the \textit{control} setting, the agent is tasked with improving its performance over time based on the feedback received. On finite MDPs,
exact prediction and optimal control can both be achieved
in the limit of infinite experience~\cite{Watkins+Dayan:1992,rummery1994line}.

\subsection{Frame-skipping}
\label{subsec:frame-skipping}

In this paper, we consider  generalisations of both prediction and control in which a frame-skip parameter $d \geq 1$ is provided as input in addition to MDP $M$. With frame-skipping, the agent is only allowed to sense every $d$-th state: that is, if the agent has sensed state $s_{t}$ at time step $t \geq 0$, it is oblivious to states $s_{t + 1}, s_{t + 2}, \dots, s_{t + d - 1}$, and next only observes $s_{t + d}$. We assume, however, that the discounted sum of the rewards accrued in between (or the $d$-step \textit{return}), is available to the agent at time step $t + d$. 
Indeed in many applications (see, for example, Section~\ref{sec:empiricalevaluation}), this return $G_{t}^{t+d}$, defined below, can be obtained without explicit sensing of intermediate states. $$G_{t}^{t+d} \eqdef r_{t} + \gamma r_{t + 1} + \dots + \gamma^{d - 1} r_{t + d - 1}.$$

In the problems we formalise below, taking $d = 1$ gives the versions with no frame-skipping.

\noindent\textbf{Prediction problem.} In the case of prediction, we assume that a fixed policy $\pi$ is independently being executed on $M$: that is, for $t \geq 0$, $a_{t} \sim \pi(s_{t}, \cdot)$. However, since the agent's sensing is limited to every $d$-th transition, its
interaction with the resulting MRP becomes a sequence of the form $s_{0}, G_{0}^{d}, s_{d}, G_{d}^{2d}, s_{2d}, G_{2d}^{3d}, \dots$. The agent must estimate $V^{\pi}$ based on this sequence.\\

\noindent\textbf{Control problem.} In the control setting, the agent is itself in charge of action selection. However, due to the constraint on sensing, the agent cannot select  actions based on state at all time steps. Rather, at each time step $t$ that state is sensed, the agent can specify a $d$-length action sequence $b \in A^{d}$, which will be executed open-loop for $d$ steps (until the next sensing step $t + d$). Hence, the agent-environment interaction takes the form $s_{0}, b_{0}, G_{0}^{d}, s_{d}, b_{d}, G_{d}^{2d}, s_{2d}, \dots$, where for $i \geq 0$, $b_{di}$ is a $d$-length action sequence. The agent's aim is still to maximise its long-term reward, but observe that
for $d > 1$, it might not be possible to match $\pi^{\star}$, which is fully closed-loop.\\

In the next section, we analyse the prediction setting with frame-skipping; in Section~\ref{sec:controlwithaction-repetition} we consider the control setting.

\section{Prediction with Frame-skipping}
\label{sec:predictionwithframe-skipping}

In this section, we drop the reference to MDP $M$ and policy $\pi$, only assuming that together they fix an MRP $P = (S, R, T, \gamma)$. For $s , s^{\prime} \in S$, $R(s)$ is the reward obtained on exiting $s$ and $T(s, s^{\prime})$ the probability of reaching $s^{\prime}$. For the convergence of any learning algorithm to the value function $V: S \to \mathbb{R}$ of $P$, it is necessary that $P$ be \textit{irreducible}, ensuring that each state will be visited infinitely often in the limit. If using frame-skip $d > 1$, we must also assume that $P$ is \textit{aperiodic}---otherwise some state  might only be visited in between sensing steps, thus precluding convergence to its value. We proceed with the assumption that $P$ is irreducible and aperiodic---in other words, \textit{ergodic}. Let $\mu: S \to (0, 1)$, subject to $\sum_{s \in S} \mu(s) = 1$, be the stationary distribution on $S$ induced by $P$.

\subsection{Consistency of Frame-skipping}
\label{subsec:consistencyofframeskipping}

If using  frame-skipping with parameter $d \geq 1$, it is immediate that the agent's interaction may be viewed as a regular one (with no frame-skipping) with induced MRP $P_{d} = (S, R_{d}, T_{d}, \gamma^{d})$, in which, if we treat reward functions as $|S|$-length vectors and
transition functions as $|S| \times |S|$ matrices, $$R_{d} = R + \gamma TR + \gamma^{2} T^{2} R + \dots + \gamma^{d - 1} T^{d - 1} R, \text{ and } T_{d} = T^{d}.$$ Since $P$ is ergodic, it follows that $P_{d}$ is ergodic. Thus, any standard prediction algorithm (for example, TD($\lambda$)~\cite[see Chapter 12]{Sutton+Barto:2018}) can be applied on $P$ with frame-skip $d$---equivalent to being applied on $P_{d}$ with no frame-skip---to converge to its value function $V_{d}: S \to \mathbb{R}$. It is easy to see that $V_{d} = V$. Surprisingly, it also emerges that the stationary distribution on $S$ induced by $P_{d}$---denote it $\mu_{d}: S \to (0, 1)$, where $\sum_{s \in S} \mu_{d}(s) = 1$---is identical to $\mu$, the stationary distribution induced by $P$. The following proposition formally establishes the consistency of frame-skipping.

\begin{proposition}
\label{prop:valdistidentical}
For $d \geq 1$, $V_{d} = V$ and
$\mu_{d} = \mu$.
\end{proposition}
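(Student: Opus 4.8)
The plan is to prove the two equalities separately, since they rest on different structural facts. The identity $V_{d} = V$ follows because frame-skipping merely regroups the terms of the discounted return without changing it, while the (more surprising) identity $\mu_{d} = \mu$ follows because any stationary distribution of $T$ is automatically stationary for $T^{d}$, at which point uniqueness---guaranteed by the already-noted ergodicity of $P_{d}$---forces the two to coincide.

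For $V_{d} = V$, I would work from the Bellman fixed-point equations $V = R + \gamma T V$ and $V_{d} = R_{d} + \gamma^{d} T^{d} V_{d}$, giving $V = (I - \gamma T)^{-1} R$ and $V_{d} = (I - \gamma^{d} T^{d})^{-1} R_{d}$; the inverses exist because $\gamma < 1$ makes the spectral radii of $\gamma T$ and $\gamma^{d} T^{d}$ strictly below one. The key algebraic fact is the factorisation $I - \gamma^{d} T^{d} = (I - \gamma T)\left(\sum_{k=0}^{d-1}\gamma^{k} T^{k}\right)$, which holds because all powers of $T$ commute. Writing the definition of $R_{d}$ as $R_{d} = \left(\sum_{k=0}^{d-1}\gamma^{k} T^{k}\right)R$ and substituting, the factor $\sum_{k=0}^{d-1}\gamma^{k} T^{k}$ cancels, since as a polynomial in $T$ it commutes with $(I - \gamma T)^{-1}$, leaving $V_{d} = (I - \gamma T)^{-1} R = V$. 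Equivalently, and more intuitively, $V_{d}(s)$ is the expectation of $\sum_{i \geq 0}\gamma^{di} G_{di}^{di+d}$; expanding each $d$-step return and reindexing shows this reassembles term-for-term into the ordinary return $\sum_{k \geq 0}\gamma^{k} r_{k}$, so the two value functions agree state by state.

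For $\mu_{d} = \mu$, I would start from the defining stationarity condition $\sum_{s} \mu(s) T(s, s') = \mu(s')$ for all $s'$ (i.e.\ $\mu T = \mu$ reading $\mu$ as a row vector). Iterating this identity $d$ times yields $\mu T^{d} = \mu$, so $\mu$ is also a stationary distribution of $T_{d} = T^{d}$. Since $P_{d}$ is ergodic, its stationary distribution $\mu_{d}$ is unique, and therefore $\mu_{d} = \mu$. Every step here is short, and I do not anticipate a genuine obstacle; the only points requiring care are the justification of the factorisation and cancellation in the first part (both resting on commutativity of powers of $T$ and on the spectral condition that makes the relevant matrices invertible), and, in the second part, the appeal to uniqueness of the stationary distribution---which is precisely what ergodicity of $P_{d}$ supplies and is the reason the apparently surprising equality $\mu_{d} = \mu$ turns out to be immediate.
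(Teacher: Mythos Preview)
Your proposal is correct. For $\mu_{d} = \mu$ your argument is the paper's: iterate the stationarity relation to get $\mu T^{d} = \mu$ and then invoke uniqueness from the ergodicity of $P_{d}$.

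For $V_{d} = V$ you lead with a Bellman/matrix-algebra route, whereas the paper uses only the direct regrouping you mention as an aside: write $V_{d}(s) = \sum_{i\geq 0}\gamma^{di}\,\mathbb{E}[G_{di}^{di+d}\mid s_{0}=s]$, unpack each $d$-step return, reindex, and recover $\sum_{t\geq 0}\gamma^{t}\,\mathbb{E}[r_{t}\mid s_{0}=s]=V(s)$. The paper's version is three lines and needs no matrix invertibility at all. Your algebraic route is also valid, but one point needs tightening: the ``cancellation'' of $B=\sum_{k=0}^{d-1}\gamma^{k}T^{k}$ requires $B$ to be invertible, not merely to commute with $(I-\gamma T)^{-1}$. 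This does hold when $\gamma<1$ (each eigenvalue of $B$ equals $(1-(\gamma\lambda)^{d})/(1-\gamma\lambda)\neq 0$ for $|\lambda|\leq 1$), but you should state it. A cleaner variant that avoids the issue entirely is to verify directly that $(I-\gamma^{d}T^{d})V = B(I-\gamma T)V = BR = R_{d}$ and then cite uniqueness of the Bellman fixed point for $P_{d}$.
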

\begin{proof}
For the first part, we have that for $s \in S$,
\begin{align*}
V_{d}(s) &= \sum_{i = 0}^{\infty} \gamma^{di} \mathbb{E}[ G_{di}^{di + d} | s_{0} = s] = \sum_{i = 0}^{\infty} \sum_{j = 0}^{d - 1}
\gamma^{di + j}   \mathbb{E}[\  r_{di + j}  | s_{0} = s] \\
&= \sum_{t = 0}^{\infty} \mathbb{E}[\gamma^{t} r_{t}  | s_{0} = s]
= V(s).
\end{align*}
For the second part, observe that since $\mu$ is the stationary distribution induced by $P$, it satisfies $T \mu = \mu$. With frame-skip $d$, we have $T_{d} \mu = T^{d} \mu = T^{d - 1} (T \mu) = T^{d - 1} \mu = \dots = T \mu = \mu,$
establishing that $\mu_{d} = \mu$ (its uniqueness following from the ergodicity of $P_{d}$).
\end{proof}
Preserving the stationary distribution is especially relevant for prediction with approximate architectures, as we see next. 

\subsection{Frame-skipping with a Linear Architecture}
\label{subsec:Frame-skipping with alineararchitecture}

As a concrete illustration, we consider the effect of  frame-skip $d$ in
Linear TD($\lambda$)~\cite[see Chapter 12]{Sutton+Barto:2018}, the well-known family of on-line prediction algorithms. We denote our generalised version of the algorithm TD$_{d}$($\lambda$), where $d \geq 1$ is the given frame-skip parameter and $\lambda \in [0, 1]$ controls  bootstrapping. With a linear architecture, $V_{d}(s)$ is approximated by $w \cdot \phi(s)$, where for $s \in S$, $\phi(s)$ is a $k$-length vector of features. The $k$-length coefficient vector $w$ is updated based on experience, keeping a $k$-length eligibility trace vector for backing up rewards to previously-visited states. Starting with $e_{0} = 0$ and arbitrary $w_{0}$, an update is made as follows for each $i \geq 0$, based on the tuple $(s_{di}, G_{di}^{di + d}, s_{di + d})$:
\begin{align*}
&\delta_{i} \gets G_{di}^{di + d} + \gamma^{d} w_{i} \cdot \phi (s_{di + d}) - 
w_{i} \cdot \phi (s_{di}); \\ %\phantom{aa}
&w_{i + 1} \gets w_{i} + \alpha \delta_{i} e_{i};  \phantom{aa}
e_{i + 1} \gets \gamma^{d} \lambda e_{i} + \phi(s_{di}),
\end{align*}
where $\alpha > 0$ is the learning rate. Observe that with full bootstrapping ($\lambda = 0$), each update by TD$_{d}$($\lambda$) is identical to a multi-step (here $d$-step) backup~\cite[see Chapter 7]{Sutton+Barto:2018} on $P$. The primary  difference, however, is that regular multi-step (and $\lambda$-) backups are performed at every time step. By contrast, TD$_{d}(\lambda)$ makes an update only once every $d$ steps, hence reducing sensing (as well as the computational cost of updating $w$) by a factor of $d$.

\begin{figure*}[b]
\centering
\mbox{
\subfloat[]{
\begin{tikzpicture}[node distance=2cm,auto,scale=0.72,transform shape,->]
\large
    \tikzset{state/.style={circle,draw=black,minimum size=10mm,thick}}
    \node[state] at (0, 0) (s1){$0$};
    \node[state] at (0, 1.5) (s2){$1$};
    \node[state] at (0, 5.0) (sd){$99$};
    \draw(s1) edge [thick, loop left] node {}  (s1);
    \draw(s1) edge [thick] node[below] {}  (s2);
    \draw(s2) edge [thick, loop left] node {}  (s2);
    \draw(s2) edge [thick] node[below] {}  (0, 2.5);
    \node[] at (0, 3.25) {$\Large{\vdots}$};
    \draw(0, 4.0) edge [thick] node[below] {}  (sd);
    \draw(sd) edge [thick, loop left] node {}  (sd);
    \draw(sd) edge [thick, bend left=45] node[right] {}  (s1);
    \draw [draw=black, opacity=0] (-1.8,-1.5) rectangle (1.8,5.7);
\normalsize
  \end{tikzpicture}
\label{fig:mrp}
}
\hspace{-1.0cm}
\subfloat[]{
\includegraphics[width=0.42\textwidth,clip=true]{./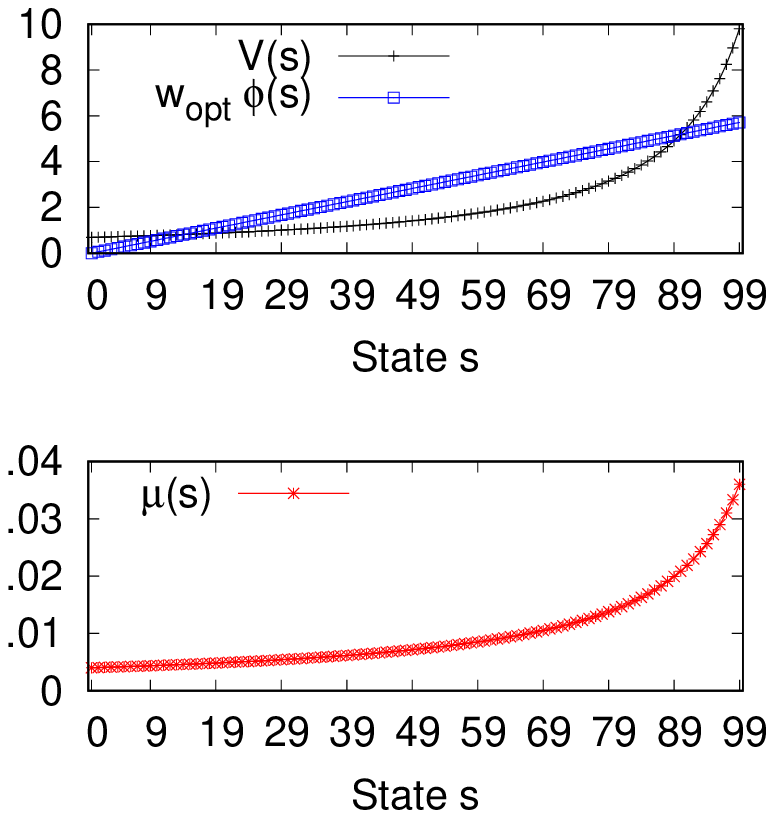}
\label{fig:distandval}
}
\hspace{-1.0cm}

\subfloat[]{
\includegraphics[width=0.45\textwidth,clip=true]{./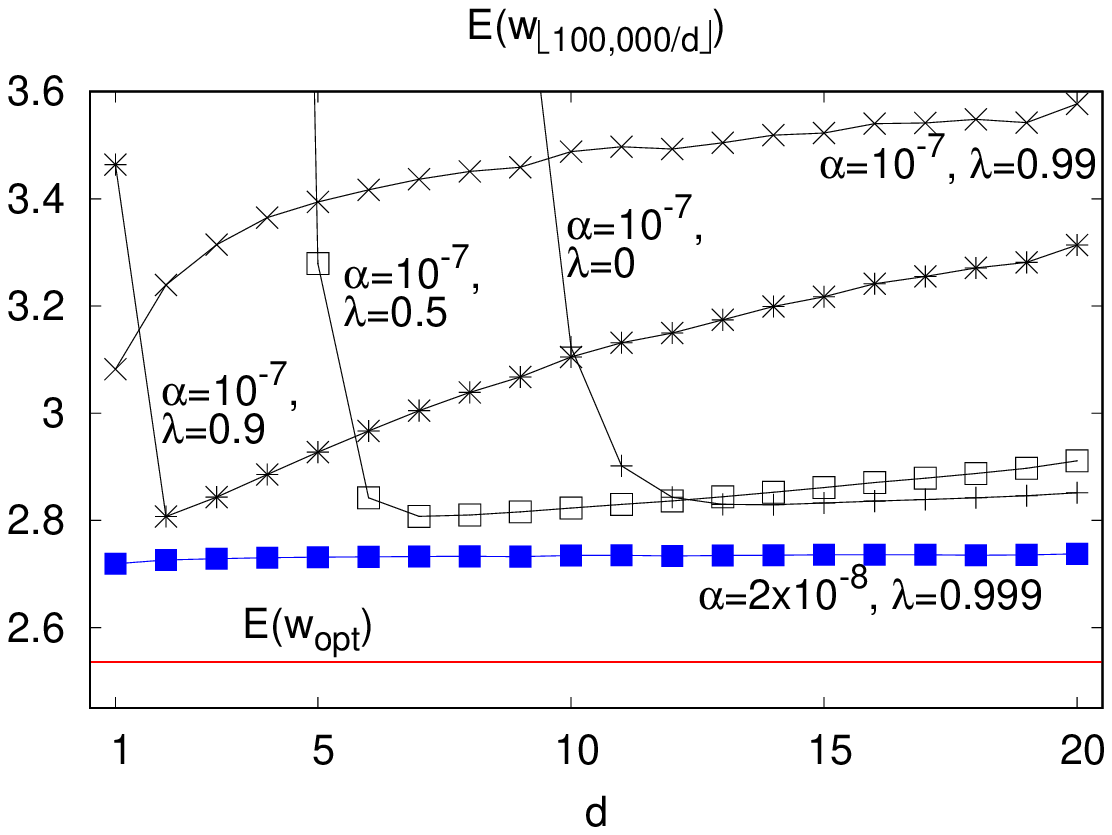}
\label{fig:mrperrors}
}
}
\caption{(a) An MRP with a set of states $S = \{0, 1, \dots, 99\}$. From state $i \in S$, there is a transition with probability $0.9 - 0.8 i / 99$ to state $(i + 1) \mod 99$; otherwise the agent stays in state $i$. All rewards are $0$, except for a $1$-reward when starting from state $99$. The MRP uses a discount factor of $0.99$. (b) The lower panel shows $\mu$; the upper panel shows $V$ as well as its best linear approximation. The linear architecture has a single parameter $w$: for state $i \in S$, $V(i)$ is approximated as $wi$. (c) Value estimation error as a function of $d$ after $100,000$ steps of TD$_{d}$($\lambda$), with $w_{0}$ picked uniformly at random from $[-1, 1]$. Each plot specifies the values of $\alpha$ and $\lambda$; the optimal error is also shown.
Since TD$_{d}$($\lambda$) makes only $(1/d)$ the number of updates of TD($\lambda$), we compensate by running it with learning rate $\alpha d$. Results are averages from 1000 random runs; standard errors are negligible.}
\label{fig:mrp}
\end{figure*}

With linear function approximation, the best result one can hope to achieve is convergence to $$w_{\text{opt}} = \argmin_{w \in \mathbb{R}^{k}} E(w), \text{ where } E(w) = \sum_{s \in S} \mu(s) \{V(s) - w \cdot \phi(s)\}^{2}.$$ It is also well-known that linear $TD(\lambda)$ converges to some $w \in \mathbb{R}^{k}$ such that $E(w) \leq \frac{1 - \gamma \lambda}{1 -  \gamma} E(w_{\text{opt}})$
~\cite{Tsitsiklis+VanRoy:1997}. Note that  TD$_{d}$($\lambda$) on $P$ is the same as TD($\lambda$) on $P_{d}$. Hence, from Proposition~\ref{prop:valdistidentical}, we conclude that TD$_{d}$($\lambda$) on $P$ converges to some $w \in \mathbb{R}^{k}$ such that $E(w) \leq \frac{1 - \gamma^{d} \lambda}{1 -  \gamma^{d}} E(w_{opt})$. The significance of this result is that the rate of sensing can be made arbitrarily small (by increasing $d$), and yet convergence to $w_{opt}$ achieved (by taking $\lambda = 1$). The result might appear intriguing, since for fixed $\lambda < 1$, a tighter bound is obtained by increasing $d$ (making \textit{fewer} updates). Nonetheless, note that the bound is on the convergent limit; the best results after any \textit{finite} number of steps are likely to be obtained for some finite value of $d$. The bias-variance analysis of multi-step returns~\cite{Kearns+Singh:2000} applies as is to $d$: small values imply more bootstrapping and bias, large values imply higher variance. 

To demonstrate the effect of $d$ in practice, we construct a relatively simple MRP---described in Figure~\ref{fig:mrp}---in which linear TD$_{d}$($\lambda$) has to learn only a single parameter $w$. Figure~\ref{fig:mrperrors} shows the prediction errors after 100,000 steps (thus $\lfloor 100,000 / d \rfloor$ learning updates). When $\alpha$ and $\lambda$ are fixed, observe that the error  for smaller values of $\lambda$ is minimised at $d > 1$, suggesting that $d$ can be a useful parameter to tune in practice. However, the lowest errors can always be obtained by taking $\lambda$ sufficiently close to $1$ and suitably lowering $\alpha$, with no need to tune $d$. We obtain similar results by generalising ``True online TD($\lambda$)''~\cite{VanSeijen+MPMS:2016}; its near-identical plot is omitted.

\section{Control with Action-repetition}
\label{sec:controlwithaction-repetition}

In this section, we analyse frame-skipping in the control setting, wherein the agent is in charge of action selection. If sensing is restricted to every $d$-th step, recall from Section~\ref{sec:problemdefinition} that the agent must choose a $d$-length sequence of actions $b \in A^{d}$ at every sensing step. The most common approach~\cite{Braylan+HMM:2015,durugkar2016deep}
is to perform action-repetition: that is, to restrict
this choice to sequences of the \textit{same} action. This way the agent continues to have $|A|$ action sequences to consider (rather than $|A|^{d}$). It is also possible to consider $d$ as a parameter for the agent to itself learn, possibly as a function of state~\cite{lakshminarayanan2016dynamic,Lakshminarayanan+SR:2017}. We report some results from this approach in Section~\ref{sec:empiricalevaluation}, but proceed with our analysis by taking $d$ to be a fixed input parameter. Thus, the agent must pick an action sequence $b \in \{a^{d}, a \in A\}$.

It is not hard to see that interacting with input MDP $M = (S, A, R, T, \gamma)$ by repeating actions $d$ times is equivalent to interacting with an induced MDP $M_{d} = (S, A_{d}, R_{d}, T_{d}, \gamma^{d})$ without action-repetition~\cite{hansen1997reinforcement}. Here $A_{d} = \{a^{d}, a \in A\}$. For $s, s^{\prime} \in S, a \in A$, (1) let $R^{a}$ denote $R(\cdot, a)$ as an $|S|$-length vector---thus $R^{a}(s) = R(s, a)$---and (2) let $T^{a}$ denote $T(\cdot, a, \cdot)$ as an $|S| \times |S|$ matrix---thus $T^{a}(s, s^{\prime}) = T(s, a, s^{\prime}).$ Then $R_{d}(s, a) = R^{a}_{d}(s)$ and $T_{d}(s, a, s^{\prime}) = T^{a}_{d}(s, s^{\prime})$, where $$R^{a}_{d} = \sum_{j = 0}^{d - 1} (\gamma T^{a})^{j} R^{a}, \text{ and } T^{a}_{d} = (T^{a})^{d}.$$

\subsection{Price of inertia}
\label{subsec:priceofinertia}

The risk of using $d > 1$ in the control setting is that in some tasks, a single unwarranted repetition of action could be catastrophic. On the other hand,  in tasks with gradual changes of state, the agent must be able to recover. To quantify the amenability of task $M$ to action repetition, we define a term called its ``price of inertia'', denoted
$\Delta_{M}$. For $d \geq 1$, $s \in S, a \in A$, let $Q^{\star}_{M}(s, a^{d})$ denote the expected long-term reward of repeating action $a$ from state $s$ for $d$ time steps, and thereafter acting optimally on $M$. The price of inertia quantifies the cost of a \textit{single} repetition: $$\Delta_{M} \eqdef \max_{s \in S, a \in A} (Q^{\star}_{M}(s, a) - Q^{\star}_{M}(s, a^{2})).$$ $\Delta_{M}$ is a \textit{local}, horizon-independent property, which we expect to be small in many families of tasks. As a concrete illustration, consider the family of deterministic MDPs that have ``reversible'' actions. A calculation in Appendix~\ref{app:priceofinertiadeterministicmdps}~\footnote{Appendices are provided in the supplementary material.} shows that $\Delta_{M}$ for any such MDP $M$
is at most $4 R_{\max}$---which is a horizon-\textit{independent} upper bound.

To further aid our understanding of the price of inertia $\Delta_{M}$, we devise a ``pitted grid world'' task, shown in Figure~\ref{fig:gridworld-figure}. This task allows for us to control $\Delta_{M}$ and examine its effect on performance as $d$ is varied. The task is a typical grid world with cells, walls, and a goal state to be reached. Episodes begin from a start state chosen uniformly at random from a designated set. The agent can select ``up'', ``down'', ``left'', and  ``right'' as actions. A selected action is retained with probability 0.85, otherwise switched to one picked uniformly at random, and thereafter implemented to have the corresponding effect. There is a reward of $-1$ at each time step, except when reaching special ``pit'' states, which incur a larger penalty. It is precisely by controlling the pit penalty that we control $\Delta_{M}$. The task is undiscounted. Figure~\ref{fig:gridworld-figure} shows optimal policies for $d = 1$ and $d = 3$ (that is, on $M_{3}$); observe that they differ especially in the vicinity of pits (which are harder to avoid with $d = 3$).

%We shall revisit the pitted grid world as we proceed with our analysis.

\begin{figure}[t]
\centering
\subfloat[]{
  \includegraphics[width=0.98\linewidth]{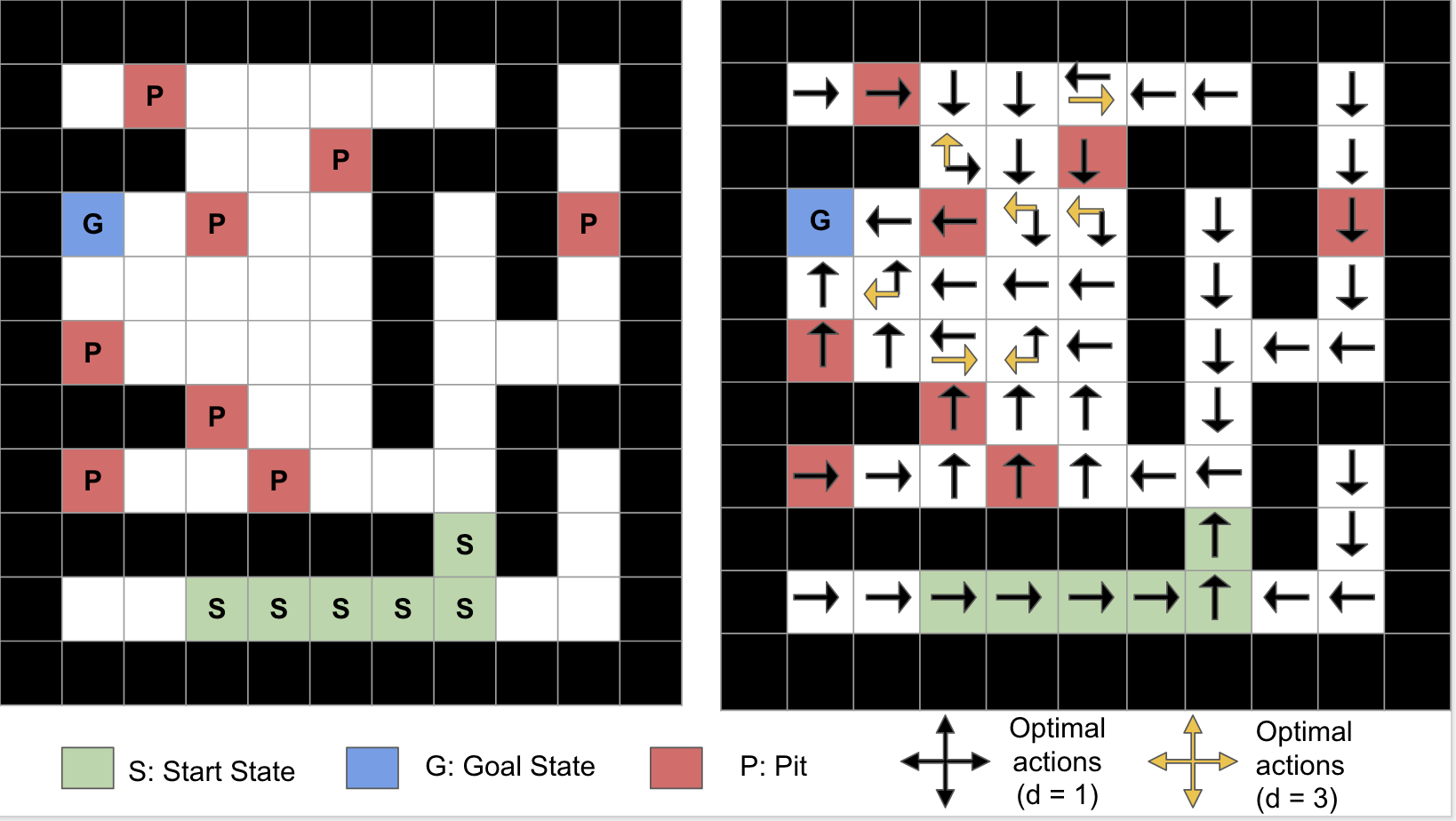}
\label{fig:gridworld-figure}
}

\mbox{
\hspace*{-0.2cm}\subfloat[]{
  \includegraphics[width=0.58\linewidth]{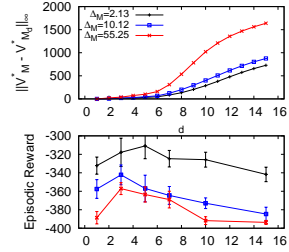}
\label{fig:diffvald}
}
\subfloat[]{
\hspace*{-1.0cm}\includegraphics[width=0.54\linewidth]{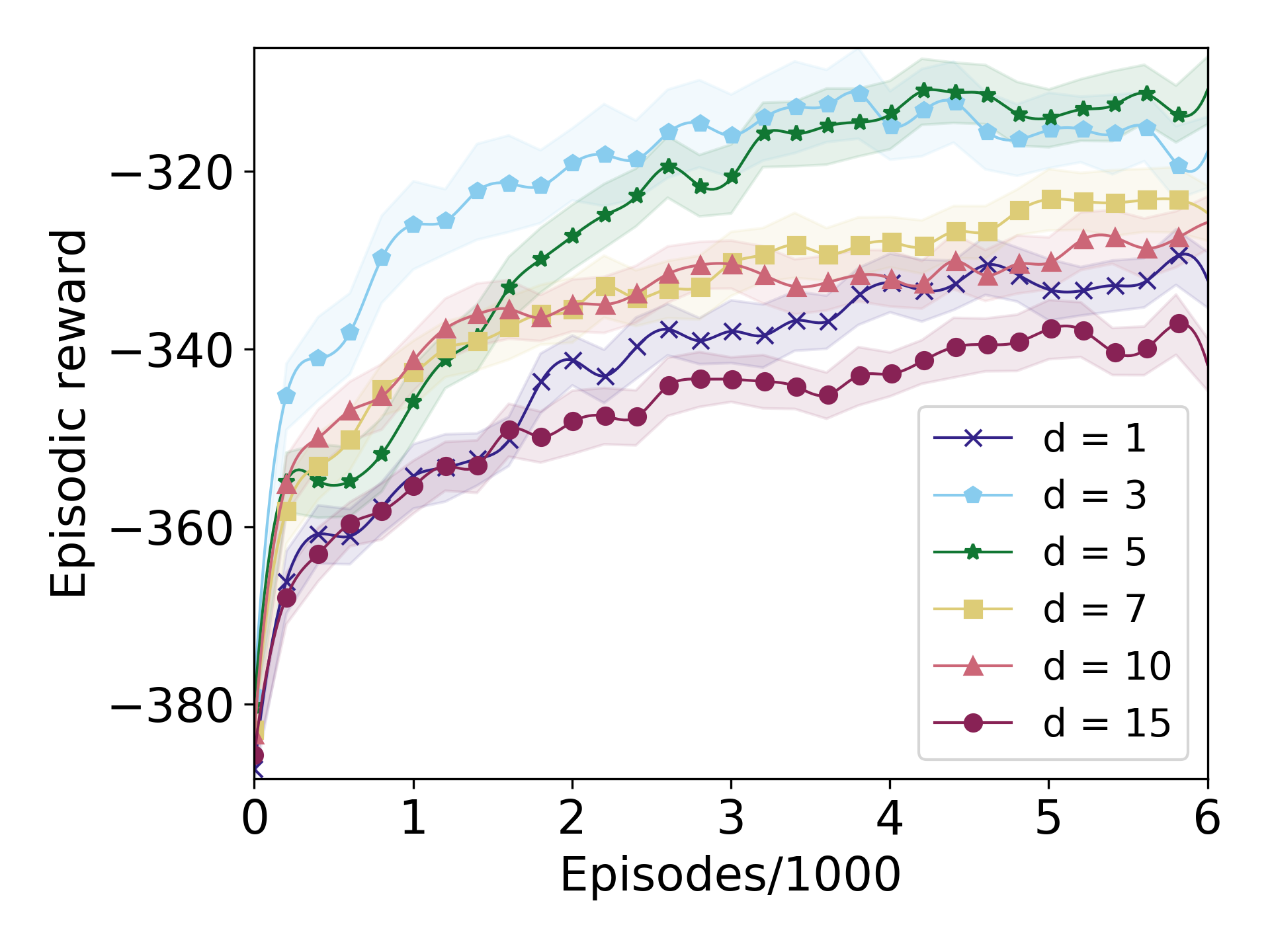}
  \label{fig:gammaqlearngridten}
}
}

\caption{(a) Left: the ``Pitted Grid World'' task (specified in text), using $\Delta_{M} = 10.12$. Right: Optimal policies when constrained to use frame-skip $d = 1$ (black/dark arrows) and $d = 3$ (golden/light) arrows (only one type is shown for states in which both policies give the same action). (b) The upper panel shows $||V^{\star}_{M} - V^{\star}_{M_{d}}||_{\infty}$, as a function of d for $\Delta_{M} \in \{2.13, 10.12, 55.26\}$. The lower panel shows the episodic reward obtained after 6000 episodes of Q-learning with aliased states, again as a function of $d$. (c) Full learning curves for Q-learning (for $\Delta_{M} = 2.13$), showing policy performance (evaluated separately for 50 episodes) at regular intervals. In this and subsequent experimental plots, error bars show one standard error, based on 5 or more independent runs.}
  \end{figure}

%For instance, Appendix~\ref{app:priceofinertia} shows that if $M$ has 

\subsection{Value deficit of action-repetition}
\label{subsec:valuedeficitofaction-repetition}

Naturally,  the constraint of having to repeat actions $d > 1$ times may limit the maximum possible long-term value attainable. We upper-bound the resulting deficit as a function of $\Delta_{M}$ and $d$. For MDP $M$, note that $V^{\star}_{M}$ is the optimal value function.\footnote{In our forthcoming analysis, we treat value and action value functions as vectors, with $\lVert\cdot\rVert_{\infty}$ denoting the max norm.}

\begin{lemma}
\label{lem:repetitionloss}
For $d \geq 1$, $\lVert V^{\star}_{M} - V^{\star}_{M_{d}} \rVert_{\infty} \leq \Delta_{M} \frac{1 - \gamma^{d - 1}}{(1 - \gamma)(1 - \gamma^{d})}$.
\end{lemma}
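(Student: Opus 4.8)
The plan is to bound the value gap one ``decision block'' at a time: I would first lift the single-repetition quantity $\Delta_M$ to a bound on the cost of $d$ consecutive repetitions, and then close the recursion with a contraction argument. The easy direction comes for free: since repeating an action is just a restriction of the policy space on $M$, every $Q^{\star}_{M}(s, a^{d}) \le V^{\star}_{M}(s)$, so $V^{\star}_{M} \ge V^{\star}_{M_{d}}$ pointwise and the left-hand side is a nonnegative quantity whose maximum I must control.

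The central step is to show that the per-repetition penalty decays geometrically. Define $g_{k}(s,a) \eqdef Q^{\star}_{M}(s, a^{k}) - Q^{\star}_{M}(s, a^{k+1})$ for $k \ge 1$, so that $\max_{s,a} g_{1}(s,a) = \Delta_{M}$ by the definition of the price of inertia. Expanding $Q^{\star}_{M}(s, a^{k}) = R(s,a) + \gamma \sum_{s'} T(s,a,s') \, Q^{\star}_{M}(s', a^{k-1})$ and subtracting the analogous identity for $a^{k+1}$ yields the recursion $g_{k}(s,a) = \gamma \sum_{s'} T(s,a,s') \, g_{k-1}(s',a)$. Because the transition weights form a probability distribution, maximising over $(s,a)$ gives $\max_{s,a} g_{k} \le \gamma \max_{s,a} g_{k-1}$, hence $\max_{s,a} g_{k}(s,a) \le \gamma^{k-1} \Delta_{M}$. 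Telescoping over $k = 1, \dots, d-1$, and using that the optimal first action $a^{\star}$ at $s$ satisfies $Q^{\star}_{M}(s, (a^{\star})^{1}) = V^{\star}_{M}(s)$ while $\max_{a} Q^{\star}_{M}(s, a^{d}) \ge Q^{\star}_{M}(s, (a^{\star})^{d})$, I obtain
\[ V^{\star}_{M}(s) - \max_{a} Q^{\star}_{M}(s, a^{d}) \le \sum_{k=1}^{d-1} g_{k}(s, a^{\star}) \le \Delta_{M} \sum_{k=1}^{d-1} \gamma^{k-1} = \Delta_{M} \frac{1 - \gamma^{d-1}}{1 - \gamma}. \]

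To finish, I would observe that $W(s) \eqdef \max_{a} Q^{\star}_{M}(s, a^{d}) = \max_{a} \bigl[ R^{a}_{d}(s) + \gamma^{d} \sum_{s'} T^{a}_{d}(s,s') \, V^{\star}_{M}(s') \bigr]$ is precisely $\mathcal{T}_{d} V^{\star}_{M}$, where $\mathcal{T}_{d}$ denotes the Bellman optimality operator of $M_{d}$; this operator is a $\gamma^{d}$-contraction in $\lVert \cdot \rVert_{\infty}$ with fixed point $V^{\star}_{M_{d}}$. Writing $V^{\star}_{M} - V^{\star}_{M_{d}} = (V^{\star}_{M} - \mathcal{T}_{d} V^{\star}_{M}) + (\mathcal{T}_{d} V^{\star}_{M} - \mathcal{T}_{d} V^{\star}_{M_{d}})$ and applying the triangle inequality gives $\lVert V^{\star}_{M} - V^{\star}_{M_{d}} \rVert_{\infty} \le \Delta_{M} \frac{1-\gamma^{d-1}}{1-\gamma} + \gamma^{d} \lVert V^{\star}_{M} - V^{\star}_{M_{d}} \rVert_{\infty}$, since the first term is exactly the nonnegative quantity bounded in the previous display. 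Rearranging yields the claimed inequality. The main obstacle is the middle step---converting the \emph{local}, single-repetition definition of $\Delta_{M}$ into a bound on the \emph{cumulative} cost of $d$ repetitions---which is exactly what the recursion for $g_{k}$ and the resulting geometric sum accomplish; once $W$ is identified as $\mathcal{T}_{d} V^{\star}_{M}$, the contraction closes the argument routinely.
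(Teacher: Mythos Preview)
Your proof is correct. Both you and the paper first establish the block bound $Q^{\star}_{M}(s,a) - Q^{\star}_{M}(s,a^{d}) \le \Delta_{M}\frac{1-\gamma^{d-1}}{1-\gamma}$; the paper does this by a direct induction on $d$, while your recursion $g_{k} = \gamma\,T^{a} g_{k-1}$ gives the finer per-step estimate $g_{k}\le\gamma^{k-1}\Delta_{M}$ before telescoping to the same sum. The genuine divergence is in closing the argument: the paper fixes the concrete policy $\pi(s)=(\pi^{\star}_{M}(s))^{d}$, proves by induction over decision blocks that its $j$-block rollout $U_{j}$ satisfies $U_{j}(s)\ge V^{\star}_{M}(s)-\Delta_{M}G_{d}H_{d,j}$, lets $j\to\infty$, and then uses $V^{\star}_{M_{d}}\ge V^{\pi}_{M_{d}}$. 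You instead recognise $W=\max_{a}Q^{\star}_{M}(\cdot,a^{d})$ as $\mathcal{T}_{d}V^{\star}_{M}$ and invoke the $\gamma^{d}$-contraction of $\mathcal{T}_{d}$ to turn the one-block Bellman residual into the full gap. Your route is shorter and makes the dependence on the horizon $1/(1-\gamma^{d})$ transparent as a contraction factor; the paper's route is more constructive in that it explicitly exhibits an action-repeating policy that achieves the bound, which is useful downstream (e.g., for the $Q$-value corollary).
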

\begin{proof}
For $m \geq 2$ and $n \geq 1$, define the terms $G_{m} \eqdef \sum_{i = 0}^{m - 2} \gamma^{i}$
and $H_{m, n} \eqdef \sum_{i = 0}^{n - 1} \gamma^{mi}.$ First we prove
\begin{align}
\label{eqn:qstarbound}
Q^{\star}_{M}(s, a^{d}) \geq Q^{\star}_{M}(s, a) - \Delta_{M} G_{d}
\end{align}
for $s \in S, a \in A, d \geq 2$.  The result is trivial for $d = 2$. Assuming it is true for 
$d \leq m$, we get
\begin{align*}
Q^{\star}_{M}(s, a^{m + 1}) &= R(s, a) + \gamma \sum_{s^{\prime} \in S} T(s, a, s^{\prime}) Q^{\star}_{M}(s^{\prime}, a^{m}) \\   
&\geq R(s, a) + \gamma \sum_{s^{\prime} \in S} T(s, a, s^{\prime}) \{ Q^{\star}_{M}(s^{\prime}, a) - \Delta_{M} G_{m}  \} \\
& = Q^{\star}_{M}(s, a^{2}) - \gamma \Delta_{M} G_{m} \\
&\geq Q^{\star}_{M}(s, a) - \Delta_{M} - \gamma \Delta_{M} G_{m} = Q^{\star}_{M}(s, a) - \Delta_{M}G_{m + 1}.\\
\end{align*}
In effect, \eqref{eqn:qstarbound} bounds the loss from persisting action $a$ for $d$  
steps, which we incorporate in the long-term loss from action-repetition. To do so, we consider a policy $\pi: S \to \{a^{d}, a \in A\}$ that takes the same atomic actions as $\pi^{\star}_{M}$, but persists them for $d$ steps. In other words, for $s \in S$, $\pi(s) = a^{d} \iff  \pi^{\star}_{M}(s) = a$. For $j \geq 1$, let $U_{j}(s)$ denote the expected long-term reward accrued from state $s \in S$ by taking the first $j$ decisions based on $\pi$ (that is, applying $\pi$ for $jd$ time steps), and then acting optimally (with no action-repetition, according to $\pi^{\star}_{M}$). We prove by induction, for $s \in S$:
\begin{align}
\label{eqn:ejbound}
U_{j}(s) \geq V^{\star}_{M}(s) - \Delta_{M} G_{d} H_{d, j}.
\end{align}
For base case, we apply \eqref{eqn:qstarbound} and get
\begin{align*}
U_{1}(s) &= Q^{\star}_{M} (s, (\pi^{\star}_{M}(s))^{d})
\geq Q^{\star}_{M}(s, \pi^{\star}_{M}(s)) - \Delta_{M} G_{d}\\
&= V^{\star}_{M}(s) - \Delta_{M} G_{d} H_{d, 1}.
\end{align*}
Assuming the result true for $j$, and again using \eqref{eqn:qstarbound}, we establish it for $j + 1$.
\begin{align*}
&U_{j + 1}(s)\\
&= R_{d}(s, \pi^{\star}_{M}(s)) + \gamma^{d}
\sum_{s^{\prime} \in S} T_{d}(s, \pi^{\star}_{M}(s), s^{\prime})  U_{j}(s^{\prime}) \\ 
&\geq R_{d}(s, \pi^{\star}_{M}(s)) + \gamma^{d}
\sum_{s^{\prime} \in S} T_{d}(s, \pi^{\star}_{M}(s), s^{\prime})  
\{V^{\star}_{M}(s^{\prime}) - \Delta_{M} G_{d} H_{d,j}\}\\
&= Q^{\star}_{M}(s, (\pi^{\star}_{M}(s))^{d}) - \gamma^{d} \Delta_{M} G_{d} H_{d,j}\\
&\geq Q^{\star}_{M}(s, \pi^{\star}_{M}(s)) - \Delta_{M} G_{d} - \gamma^{d} \Delta_{M} G_{d} H_{d,j} \\
 &= V^{\star}_{M}(s) - \Delta_{M} G_{d} H_{d, j + 1}.
\end{align*}
Observe that $\lim_{j \to \infty} U_{j}(s) = V^{\pi}_{M_{d}}(s)$: the value of $s$ when $\pi$ is executed in $M_{d}$. The result follows by using $V^{\pi}_{M_{d}}(s) \leq V^{\star}_{M_{d}}(s)$, and substituting for $G_{d}$ and $H_{d, \infty}$.
\end{proof}
The upper bound in the lemma can be generalised to action value functions, and also shown to be tight. Proofs of the following results are given in appendices \ref{app:proofofq} and \ref{app:prooflowerbound}.

%bound in Lemma~\ref{lem:repetitionloss} is tight in the following sense; a proof by construction is given in Appendix~\ref{app:prooflowerbound}.

\begin{corollary}
\label{cor:repetitionlosswithq}
For $d \geq 1$, $\lVert Q^{\star}_{M} - Q^{\star}_{M_{d}} \rVert_{\infty} \leq \Delta_{M} \frac{1 - \gamma^{d - 1}}{(1 - \gamma)(1 - \gamma^{d})}$.
\end{corollary}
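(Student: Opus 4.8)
The plan is to reduce the action-value statement to the value-function bound already established in Lemma~\ref{lem:repetitionloss}, by identifying each atomic action $a \in A$ with its repeated counterpart $a^{d} \in A_{d}$ and comparing $Q^{\star}_{M}(s,a)$ with $Q^{\star}_{M_{d}}(s,a^{d})$ pointwise over $S \times A$. The starting point is to write down the relevant Bellman expansions. Acting optimally in $M$ after a single $a$ gives $Q^{\star}_{M}(s,a) = R(s,a) + \gamma \sum_{s'} T(s,a,s') V^{\star}_{M}(s')$, while a single (repeated) action in $M_{d}$ gives $Q^{\star}_{M_{d}}(s,a^{d}) = R_{d}(s,a) + \gamma^{d} \sum_{s'} T_{d}(s,a,s') V^{\star}_{M_{d}}(s')$. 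I would also use the intermediate quantity $Q^{\star}_{M}(s,a^{d})$---the value of repeating $a$ for $d$ steps and then acting optimally in $M$---which shares the same immediate term as $Q^{\star}_{M_{d}}(s,a^{d})$, namely $Q^{\star}_{M}(s,a^{d}) = R_{d}(s,a) + \gamma^{d} \sum_{s'} T_{d}(s,a,s') V^{\star}_{M}(s')$.

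With these in hand, I would split the difference into two non-negative pieces, $Q^{\star}_{M}(s,a) - Q^{\star}_{M_{d}}(s,a^{d}) = [Q^{\star}_{M}(s,a) - Q^{\star}_{M}(s,a^{d})] + [Q^{\star}_{M}(s,a^{d}) - Q^{\star}_{M_{d}}(s,a^{d})]$. The first bracket is exactly the one-versus-$d$ repetitions gap, controlled above by $\Delta_{M} G_{d}$ (with $G_{d} = (1-\gamma^{d-1})/(1-\gamma)$) via inequality~\eqref{eqn:qstarbound} from the lemma, and bounded below by $0$ since forcing $a$ to repeat can only lose value relative to acting optimally after the first step. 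The second bracket, by the two expansions above, equals $\gamma^{d} \sum_{s'} T_{d}(s,a,s')\{V^{\star}_{M}(s') - V^{\star}_{M_{d}}(s')\}$; because $T_{d}(s,a,\cdot)$ is a probability distribution and $V^{\star}_{M_{d}} \le V^{\star}_{M}$ pointwise (every $M_{d}$-policy is a feasible, action-repeating $M$-policy), this term is non-negative and at most $\gamma^{d} \lVert V^{\star}_{M} - V^{\star}_{M_{d}} \rVert_{\infty}$, which Lemma~\ref{lem:repetitionloss} caps at $\gamma^{d} \Delta_{M} G_{d} / (1-\gamma^{d})$.

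Combining the two bounds and maximising over $s \in S, a \in A$ finishes the argument: the sum is $\Delta_{M} G_{d} \bigl(1 + \gamma^{d}/(1-\gamma^{d})\bigr) = \Delta_{M} G_{d}/(1-\gamma^{d})$, precisely $\Delta_{M} \frac{1-\gamma^{d-1}}{(1-\gamma)(1-\gamma^{d})}$; the case $d = 1$ is immediate since $M_{1} = M$. I expect the only delicate points to be (i) fixing the right notion of comparison, i.e.\ pairing $Q^{\star}_{M}(\cdot,a)$ with $Q^{\star}_{M_{d}}(\cdot,a^{d})$ through the identification $A \leftrightarrow A_{d}$, and (ii) verifying the two monotonicity facts ($V^{\star}_{M_{d}} \le V^{\star}_{M}$ and $Q^{\star}_{M}(s,a) \ge Q^{\star}_{M}(s,a^{d})$) that keep both brackets non-negative, so that the difference is controlled without cancellation and telescopes exactly to the lemma's expression. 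The geometric-series bookkeeping is the easy part.
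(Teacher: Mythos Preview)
Your proposal is correct and takes essentially the same route as the paper: both arguments pass through the intermediate quantity $Q^{\star}_{M}(s,a^{d})$, bounding $Q^{\star}_{M}(s,a) - Q^{\star}_{M}(s,a^{d})$ via inequality~\eqref{eqn:qstarbound} and $Q^{\star}_{M}(s,a^{d}) - Q^{\star}_{M_{d}}(s,a^{d})$ via Lemma~\ref{lem:repetitionloss}, then summing to get $\Delta_{M} G_{d}/(1-\gamma^{d})$. Your explicit check that both brackets are non-negative (hence that the max-norm is one-sided) is a small but welcome addition that the paper leaves implicit.
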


\begin{proposition}
\label{prop:lowerbound}
For every $\Delta > 0$, $d \geq 2$ , and $\gamma \in [0, 1)$, there exists an MDP $M$ with 
$\Delta_{M} = \Delta$ and discount factor $\gamma$ such that $\lVert V^{\star}_{M} - V^{\star}_{M_{d}} \rVert_{\infty} = \lVert Q^{\star}_{M} - Q^{\star}_{M_{d}} \rVert_{\infty} = \Delta_{M} \frac{1 - \gamma^{d - 1}}{(1 - \gamma)(1 - \gamma^{d})}$.
\end{proposition}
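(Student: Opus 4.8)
The plan is to lean on the upper bounds already in hand. Lemma~\ref{lem:repetitionloss} and Corollary~\ref{cor:repetitionlosswithq} give $\lVert V^{\star}_{M} - V^{\star}_{M_{d}}\rVert_{\infty}$ and $\lVert Q^{\star}_{M} - Q^{\star}_{M_{d}}\rVert_{\infty}$ upper bounds equal to the target expression, so it suffices to exhibit a single MDP that meets the bound. Moreover, I would only build the MDP to make the \emph{value}-function bound tight: since $V^{\star}_{M}(s) = \max_{a} Q^{\star}_{M}(s,a)$ and likewise in $M_{d}$, the elementary inequality $\lvert \max_{a} f(a) - \max_{a} g(a)\rvert \le \max_{a}\lvert f(a)-g(a)\rvert$ gives $\lVert V^{\star}_{M} - V^{\star}_{M_{d}}\rVert_{\infty} \le \lVert Q^{\star}_{M} - Q^{\star}_{M_{d}}\rVert_{\infty}$. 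Combined with Corollary~\ref{cor:repetitionlosswithq}, any construction achieving the $V$-bound with equality automatically squeezes $\lVert Q^{\star}_{M} - Q^{\star}_{M_{d}}\rVert_{\infty}$ to the same value, so the $Q$-part comes for free.

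For the construction I would use a rotationally symmetric $d$-state cycle that forces a fresh action at every step of the optimal closed-loop trajectory, while repetition locks the agent into a single action. Take states $\{0,1,\dots,d-1\}$ and actions $\{a_{0},\dots,a_{d-1}\}$, all transitions deterministic: from state $i$, the ``matching'' action $a_{i}$ advances $i \mapsto (i+1)\bmod d$ and pays reward $r$, whereas every other action $a_{k}$ ($k\neq i$) self-loops at $i$ and pays a smaller reward $r' < r$. Intuitively, a closed-loop agent collects $r$ at every step by switching to the matching action, so $V^{\star}_{M}\equiv r/(1-\gamma)$; a repetition agent must fix one action for the whole $d$-block, which advances correctly on the first step but then coasts on the worse self-loop for the remaining $d-1$ steps and, by symmetry, lands in an equally bad state, so the per-block loss recurs and compounds by $\gamma^{d}$.

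The remaining work is to evaluate the two value functions and $\Delta_{M}$. By rotational symmetry $V^{\star}_{M_{d}}$ is a constant $W$, and a one-block comparison shows the optimal repetition choice is the matching advance action (it beats any mismatched action by exactly $r-r'$ per block), giving $W=\bigl(r + r'\sum_{j=1}^{d-1}\gamma^{j}\bigr)/(1-\gamma^{d})$. Subtracting from $r/(1-\gamma)$ and simplifying with $\sum_{j=1}^{d-1}\gamma^{j}=\gamma G_{d}$ yields a deficit of $\gamma(r-r')\,\frac{1-\gamma^{d-1}}{(1-\gamma)(1-\gamma^{d})}$ at every state. For the price of inertia I would use the identity $Q^{\star}_{M}(s,a)-Q^{\star}_{M}(s,a^{2})=\gamma\,[V^{\star}_{M}(s')-Q^{\star}_{M}(s',a)]$, with $s'$ the successor of $s$ under $a$: here the bracketed advantage is $0$ for a matching action and exactly $r-r'$ for any mismatched one, so $\Delta_{M}=\gamma(r-r')$. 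Setting $r-r'=\Delta/\gamma$ (say $r=\Delta/\gamma$, $r'=0$, with $R_{\max}$ chosen large enough) makes $\Delta_{M}=\Delta$ and the deficit exactly $\Delta\frac{1-\gamma^{d-1}}{(1-\gamma)(1-\gamma^{d})}$, as required.

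The main thing to be careful about is establishing $\Delta_{M}$ as a \emph{global} maximum over all state--action pairs rather than just the pair used above; the symmetric reward structure makes this routine, since the only nonzero action advantage anywhere equals $r-r'$. The other point needing attention is confirming that the repetition policy I single out is genuinely optimal in $M_{d}$ (not merely a good candidate), which again follows from the symmetry together with the single one-block comparison. Finally, the case $\gamma=0$ is degenerate, since there $\Delta_{M}$ is forced to $0$ for every MDP; I would therefore present the construction for $\gamma\in(0,1)$ and dispose of $\gamma=0$ separately.
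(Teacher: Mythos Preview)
Your proposal is correct and uses a genuinely different construction from the paper's. The paper builds a $d$-state cycle with only \emph{two} actions, \textsc{stay} and \textsc{move}: state $1$ has \textsc{stay} self-looping with reward $0$ and \textsc{move} advancing with reward $x=\Delta/\gamma$; every other state has \textsc{stay} self-looping with reward $x$ and \textsc{move} advancing with reward $0$. In that MDP the loss from repetition is concentrated entirely at state $1$ (states $2,\dots,d$ can simply \textsc{stay}$^{d}$ and lose nothing), and the paper directly exhibits $(1,\textsc{move})$ as the witnessing state--action pair for both the $V$- and $Q$-norms. Your cycle instead uses $d$ actions and rotational symmetry, so the loss is uniform across all states; this makes the value computation a single constant $W$ and lets you get the $Q$-part for free via the sandwich $\lVert V^{\star}_{M}-V^{\star}_{M_{d}}\rVert_{\infty}\le \lVert Q^{\star}_{M}-Q^{\star}_{M_{d}}\rVert_{\infty}\le \Delta_{M}\frac{1-\gamma^{d-1}}{(1-\gamma)(1-\gamma^{d})}$, a step the paper leaves implicit. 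Your remark on $\gamma=0$ is also apt: with $\gamma=0$ every MDP has $\Delta_{M}=0$, so the proposition cannot hold as stated for $\gamma=0$ and $\Delta>0$; the paper's own construction sets $x=\Delta/\gamma$ without commenting on this. One small point: in your MDP the bracketed advantage $V^{\star}_{M}(s')-Q^{\star}_{M}(s',a)$ is in fact \emph{always} $r-r'$ (the ``$0$ for a matching action'' case never arises, since after any transition the persisted action is no longer matching), but this does not affect your computation of $\Delta_{M}$.
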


The matching lower bound in Proposition~\ref{prop:lowerbound} arises from a carefully-designed MDP; in practice we expect to encounter tasks $M$ for which the upper bound on $\lVert V^{\star}_{M} - V^{\star}_{M_{d}} \rVert_{\infty}$ is loose. Although our analysis is for infinite discounted reward, we expect $\Delta_{M}$ to play a similar role on undiscounted episodic tasks such as the pitted grid world. Figure~\ref{fig:diffvald} shows computed values of the performance drop from action-repetition, which monotonically increases with $d$ for every $\Delta_{M}$ value. Even so, the analysis to follow shows that using $d > 1$ might yet be the most effective if behaviour is \textit{learned}.

\subsection{Analysis of control with action-repetition}
\label{sec:analysisofcontrolwithaction-repetition}

We now proceed to our main result: that the deficit induced by $d$ can be offset by the benefit it brings in the form of a shorter task horizon. Since standard control algorithms such as Q-learning and Sarsa may not even converge with function approximation, we sidestep the actual process used to update weights. All we assume is that (1) the learning process produces as its output $\hat{Q}$, an approximate action value function, and (2) as is the common practice, the recommended policy $\hat{\pi}$ is greedy with respect to $\hat{Q}$: that is, for $s \in S$, $\hat{\pi}(s) = \argmax_{a \in A} \hat{Q}(s, a)$. We show that on an MDP $M$ for which $\Delta_{M}$ is small, it could in aggregate be beneficial to execute $\hat{\pi}$ with frame-skip $d > 1$; for clarity let us denote the resulting policy $\hat{\pi}_{d}: S \to \{a^{d}, a \in A\}$. The result holds regardless of whether $\hat{Q}$ was itself learned with or without frame-skipping, although in practice, we invariably find it more effective to use the same frame-skip parameter $d$ for both learning and evaluation. 

Singh and Yee~\cite{singh1994upper} provide a collection of upper bounds on the performance loss from acting greedily with respect to an \textit{approximate} value function or action value function. The lemma below is not explicitly derived in their analysis; we furnish an independent proof in Appendix~\ref{app:proofoflemmaapprox}.

\begin{lemma}
\label{lem:approximategreedyloss}
For MDP $M = (S, A, T, R, \gamma)$, let $\hat{Q}: S \times A \to \mathbb{R}$ be an $\epsilon$-approximation of $Q^{\star}_{M}$. In other words, $\lVert Q^{\star}_{M} - \hat{Q} \rVert_{\infty} \leq \epsilon$. Let $\hat{\pi}$ be greedy with respect to $\hat{Q}$. We have:
$\lVert V^{\star}_{M} - V^{\hat{\pi}}_{M} \rVert_{\infty} \leq \frac{2 \epsilon \gamma}{1 - \gamma}.$
\end{lemma}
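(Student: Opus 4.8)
The plan is to bound the per-state loss $\Delta(s) \eqdef V^{\star}_{M}(s) - V^{\hat{\pi}}_{M}(s)$ by setting up a single self-referential inequality in $\lVert \Delta \rVert_{\infty}$ and solving it as a geometric series. Since $\hat{\pi}$ is a genuine (stationary) policy, optimality of $V^{\star}_{M}$ gives $\Delta(s) \geq 0$ for all $s$, so it suffices to obtain a uniform upper bound. The two ingredients I would assemble are (i) a pointwise bound on the cost of a \emph{single} greedy decision, and (ii) a one-step unfolding of $\hat{\pi}$ that exposes a $\gamma$-contraction in $\Delta$.

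First I would isolate the one-step greedy regret. Fix $s \in S$ and write $a^{\star} = \pi^{\star}_{M}(s)$ and $\hat{a} = \hat{\pi}(s)$. Because $\hat{a}$ maximises $\hat{Q}(s, \cdot)$, we have $\hat{Q}(s, \hat{a}) \geq \hat{Q}(s, a^{\star})$, so inserting and cancelling the $\hat{Q}$ terms gives the sandwich
\begin{align*}
Q^{\star}_{M}(s, a^{\star}) - Q^{\star}_{M}(s, \hat{a}) &= [Q^{\star}_{M}(s, a^{\star}) - \hat{Q}(s, a^{\star})] + [\hat{Q}(s, a^{\star}) - \hat{Q}(s, \hat{a})] \\ &\quad + [\hat{Q}(s, \hat{a}) - Q^{\star}_{M}(s, \hat{a})].
\end{align*}
The first and third brackets are each at most $\epsilon$ by the $\epsilon$-approximation hypothesis, and the middle bracket is at most $0$ by greediness, so the one-step regret $V^{\star}_{M}(s) - Q^{\star}_{M}(s, \hat{a})$ is at most $2\epsilon$.

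Next I would unfold one step of $\hat{\pi}$. Using $V^{\star}_{M}(s) = Q^{\star}_{M}(s, a^{\star})$ together with the Bellman equations for $Q^{\star}_{M}(s, \hat{a})$ and $V^{\hat{\pi}}_{M}(s)$, the shared reward $R(s, \hat{a})$ cancels and I obtain the exact identity
\begin{align*}
\Delta(s) = [V^{\star}_{M}(s) - Q^{\star}_{M}(s, \hat{a})] + \gamma \sum_{s' \in S} T(s, \hat{a}, s') \Delta(s').
\end{align*}
Taking the max norm of both sides and substituting the one-step bound yields $\lVert \Delta \rVert_{\infty} \leq (\text{first-step term}) + \gamma \lVert \Delta \rVert_{\infty}$, which rearranges to $\frac{1}{1-\gamma}$ times the first-step term. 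Equivalently, I would phrase this as the operator contraction $V^{\star}_{M} - V^{\hat{\pi}}_{M} = (\mathcal{T} V^{\star}_{M} - \mathcal{T}^{\hat{\pi}} V^{\star}_{M}) + (\mathcal{T}^{\hat{\pi}} V^{\star}_{M} - \mathcal{T}^{\hat{\pi}} V^{\hat{\pi}}_{M})$, where $\mathcal{T}$ and $\mathcal{T}^{\hat{\pi}}$ are the Bellman optimality and policy operators, the second bracket being $\gamma$-contractive.

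The crux, and the step I expect to be the main obstacle, is pinning down the first-step term sharply enough that the final constant is $\frac{2\epsilon\gamma}{1-\gamma}$ rather than the looser $\frac{2\epsilon}{1-\gamma}$ that the naive sandwich above delivers. The naive bound charges the full $2\epsilon$ at the first decision; to recover the extra factor of $\gamma$ I would need to argue that the approximation error affects the comparison of $a^{\star}$ against $\hat{a}$ only through the \emph{discounted continuation} values rather than the immediate reward, so that expanding each action value by one Bellman backup before comparing multiplies the $2\epsilon$ by $\gamma$. Establishing that the regret genuinely lives in the $\gamma$-weighted bootstrapped term, rather than settling for the crude $2\epsilon$, is the delicate part of the argument; the remaining geometric-series bookkeeping is routine.
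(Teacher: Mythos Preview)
Your approach is the same recursion the paper unrolls inductively: the paper defines $U_{j}(s)$ as the value of following $\hat{\pi}$ for $j$ steps and then $\pi^{\star}$, and carries the hypothesis $U_{j}(s) \geq V^{\star}_{M}(s) - 2\epsilon\sum_{k=1}^{j}\gamma^{k}$, letting $j\to\infty$. Both arguments reduce to bounding the one-step regret $V^{\star}_{M}(s) - Q^{\star}_{M}(s,\hat{\pi}(s))$ and feeding it through a $\gamma$-contraction.

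Where you stop at $2\epsilon$ for that one-step regret and obtain $\frac{2\epsilon}{1-\gamma}$, the paper tries exactly the manoeuvre you sketch in your last paragraph: it expands $Q^{\star}_{M}(s,\hat{\pi}(s))$ by one Bellman step, replaces $V^{\star}_{M}(s')$ by $Q^{\star}_{M}(s',\hat{\pi}(s'))$ and then by $\hat{Q}(s',\hat{\pi}(s'))$ (losing $\gamma\epsilon$), invokes greediness to swap to $\pi^{\star}$, and converts back to $Q^{\star}_{M}(s',\pi^{\star}(s'))=V^{\star}_{M}(s')$ (losing another $\gamma\epsilon$). That is how the extra factor $\gamma$ enters.

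Your hesitation about this step is warranted. The greediness swap in the paper's chain asserts
\[
R(s,\hat{\pi}(s)) + \gamma\!\sum_{s'}T(s,\hat{\pi}(s),s')\,\hat{Q}(s',\hat{\pi}(s'))
\;\geq\;
R(s,\pi^{\star}(s)) + \gamma\!\sum_{s'}T(s,\pi^{\star}(s),s')\,\hat{Q}(s',\pi^{\star}(s')),
\]
which would need $\hat{\pi}(s)$ to maximise the one-step lookahead $a \mapsto R(s,a)+\gamma\sum_{s'}T(s,a,s')\max_{a'}\hat{Q}(s',a')$. But $\hat{\pi}$ is only greedy with respect to $\hat{Q}$ itself, and $\hat{Q}$ need not satisfy any Bellman consistency. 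In fact the sharper constant cannot hold in general for $\hat{Q}$-greedy policies: take $s_{2}$ absorbing with reward $0$, and from $s_{1}$ let actions $a,b$ both go deterministically to $s_{2}$ with $R(s_{1},a)=0$, $R(s_{1},b)=-c$; set $\hat{Q}(s_{1},a)=-\epsilon$, $\hat{Q}(s_{1},b)=-c+\epsilon$. For any $c<2\epsilon$ the greedy action is $b$ and $V^{\star}_{M}(s_{1})-V^{\hat{\pi}}_{M}(s_{1})=c$, which exceeds $\frac{2\epsilon\gamma}{1-\gamma}$ whenever $\gamma<\tfrac{1}{2}$. So the bound you actually prove, $\frac{2\epsilon}{1-\gamma}$, is the correct one here; the additional $\gamma$ belongs to the companion Singh--Yee statement in which $\hat{\pi}$ is defined via a one-step lookahead on an approximate $V$, not via an approximate $Q$.
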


The implication of the lemma is that the performance loss due to a prediction error scales as $\theta(\frac{\gamma}{1 - \gamma})$. Informally, $\frac{1}{1 - \gamma}$ may be viewed as the effective task horizon. Now observe that if a policy is implemented with frame-skip $d > 1$, the loss only scales as $\theta(\frac{\gamma^{d}}{1 - \gamma^{d}})$, which can be substantially smaller. However, the performance loss defined in Lemma~\ref{lem:approximategreedyloss} is with respect to optimal values in the \textit{underlying} MDP, which is $M_{d}$ (rather than $M$) when action-repetition is performed with $d >1$. Fortunately, we already have an upper bound on $\lVert V^{\star}_{M} - V^{\star}_{M_{d}} \rVert_{\infty}$ from Lemma~\ref{lem:repetitionloss}, which we can add to the one from Lemma~\ref{lem:approximategreedyloss} to meaningfully
compare $\hat{\pi}_{d}: S \to \{a^{d}, a \in A\}$ with $\pi^{\star}_{M}$. Doing so, we obtain our main result. 

\begin{theorem} 
\label{thm:aggregatebound}
Fix MDP $M = (S, A, R, T, \gamma)$, and $d \geq 1$. Assume that a learning algorithm  returns action-value function $\hat{Q}: S \times A \to \mathbb{R}$. Let $\hat{\pi}_{d}: S \to \{a^{d}, a \in A\}$ be greedy with respect to $\hat{Q}$. There exist constants $C_{1}(\gamma, d)$ and $C_{2}(M, \hat{Q})$ such that $$\lVert V^{\star}_{M} - V^{\hat{\pi}_{d}}_{M} \rVert_{\infty} \leq 
\Delta_{M} C_{1}(\gamma, d) + \frac{\gamma^{d}}{1 - \gamma^{d}} C_{2}(M, \hat{Q}),$$ with the dependencies of $C_{1}$ and $C_{2}$ shown explicitly in parentheses.
\end{theorem}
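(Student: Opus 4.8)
The plan is to assemble the total loss from two pieces by the triangle inequality: the loss from the shorter effective horizon of greedy action selection (Lemma~\ref{lem:approximategreedyloss}) and the loss forced by action-repetition (Lemma~\ref{lem:repetitionloss} and Corollary~\ref{cor:repetitionlosswithq}). The bridge between the two is the equivalence noted at the start of Section~\ref{sec:controlwithaction-repetition}: executing the action-repetition policy $\hat\pi_d$ on $M$ is identical to executing the corresponding single-decision policy on the induced MDP $M_d = (S, A_d, R_d, T_d, \gamma^d)$. In particular the discounted returns coincide, so $V^{\hat\pi_d}_M = V^{\hat\pi_d}_{M_d}$, and I can carry out the greedy-loss argument entirely inside $M_d$ before translating back to $M$.

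First I would apply Lemma~\ref{lem:approximategreedyloss} to the MDP $M_d$, whose discount factor is $\gamma^d$. Identifying each repeated action $a^d \in A_d$ with the atomic action $a$, the function $\hat Q$ viewed as an action-value function on $M_d$ is exactly the one with respect to which $\hat\pi_d$ is greedy. Writing $\epsilon_d \eqdef \lVert Q^\star_{M_d} - \hat Q \rVert_\infty$, the lemma gives
$$\lVert V^\star_{M_d} - V^{\hat\pi_d}_{M_d} \rVert_\infty \le \frac{2\gamma^d}{1-\gamma^d}\,\epsilon_d.$$
I would then bound the quantity of interest by the triangle inequality,
$$\lVert V^\star_M - V^{\hat\pi_d}_M \rVert_\infty \le \lVert V^\star_M - V^\star_{M_d} \rVert_\infty + \lVert V^\star_{M_d} - V^{\hat\pi_d}_{M_d} \rVert_\infty,$$
using $V^{\hat\pi_d}_M = V^{\hat\pi_d}_{M_d}$ and controlling the first term by Lemma~\ref{lem:repetitionloss}.

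The step I expect to need the most care is reconciling the dependencies demanded by the statement: the theorem requires $C_2$ to depend only on $M$ and $\hat Q$, whereas $\epsilon_d$ depends on $d$ through $Q^\star_{M_d}$. To strip out this dependence I would split $\epsilon_d$ once more, $\epsilon_d \le \lVert Q^\star_{M_d} - Q^\star_M \rVert_\infty + \lVert Q^\star_M - \hat Q \rVert_\infty$, and bound the first summand by Corollary~\ref{cor:repetitionlosswithq}. Abbreviating the common factor of Lemma~\ref{lem:repetitionloss} and Corollary~\ref{cor:repetitionlosswithq} as $B(\gamma,d) \eqdef \frac{1-\gamma^{d-1}}{(1-\gamma)(1-\gamma^d)}$, this yields $\epsilon_d \le \Delta_M\,B(\gamma,d) + \lVert Q^\star_M - \hat Q \rVert_\infty$, which moves all $d$-dependence into the $\Delta_M$ term and leaves a $d$-free approximation error.

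Collecting the pieces, the bound reads
$$\lVert V^\star_M - V^{\hat\pi_d}_M \rVert_\infty \le \Delta_M\,B(\gamma,d) + \frac{2\gamma^d}{1-\gamma^d}\Bigl(\Delta_M\,B(\gamma,d) + \lVert Q^\star_M - \hat Q \rVert_\infty\Bigr),$$
and I would read off the constants by grouping the $\Delta_M$ contributions separately from the residual error: set $C_1(\gamma,d) = B(\gamma,d)\bigl(1 + \tfrac{2\gamma^d}{1-\gamma^d}\bigr)$, which involves only $\gamma$ and $d$, and $C_2(M,\hat Q) = 2\lVert Q^\star_M - \hat Q \rVert_\infty$, which involves only $M$ and $\hat Q$. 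This matches the claimed form, and the $\frac{\gamma^d}{1-\gamma^d}$ prefactor on the $C_2$ term makes explicit how a larger $d$, by shortening the effective horizon, damps the contribution of the learner's approximation error.
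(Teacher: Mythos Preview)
Your proposal is correct and follows essentially the same approach as the paper: triangle inequality to split into $\lVert V^\star_M - V^\star_{M_d}\rVert_\infty$ and $\lVert V^\star_{M_d} - V^{\hat\pi_d}_{M_d}\rVert_\infty$, Lemma~\ref{lem:repetitionloss} for the first, Lemma~\ref{lem:approximategreedyloss} on $M_d$ for the second, then a further triangle inequality on $\epsilon_d$ together with Corollary~\ref{cor:repetitionlosswithq} to push the $d$-dependence into the $\Delta_M$ term. Your explicit identification of $C_1(\gamma,d)=B(\gamma,d)\bigl(1+\tfrac{2\gamma^d}{1-\gamma^d}\bigr)$ and $C_2(M,\hat Q)=2\lVert Q^\star_M-\hat Q\rVert_\infty$ is exactly what the paper leaves implicit in its final line.
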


\begin{proof}
By the triangle inequality,
$$\lVert V^{\star}_{M} - V^{\hat{\pi_{d}}}_{M} \rVert_{\infty} \leq 
\lVert V^{\star}_{M} - V^{\star}_{M_{d}} \rVert_{\infty} +
\lVert V^{\star}_{M_{d}} - V^{\hat{\pi}_{d}}_{M} \rVert_{\infty}.$$ 

Lemma~\ref{lem:repetitionloss} upper-bounds the first RHS term by $\Delta_{M} C_{3}(\gamma, d)$, where $C_{3}(\gamma, d) = \frac{1 - \gamma^{d - 1}}{(1 - \gamma)(1 - \gamma^{d})}.$ Observe that the second RHS term may be written as $\lVert V^{\star}_{M_{d}} - V^{\hat{\pi}}_{M_{d}} \rVert_{\infty}$, which Lemma~\ref{lem:approximategreedyloss} upper-bounds by $\frac{2\epsilon\gamma^{d}}{1 - \gamma^{d}}$, where $\hat{Q}$ is an $\epsilon$-approximation of $Q^{\star}_{M_{d}}$. In turn, $\epsilon$ can be replaced by $\lVert Q^{\star}_{M_{d}} - \hat{Q}\rVert_{\infty}$, which is itself upper-bounded using the triangle inequality by $\lVert Q^{\star}_{M_{d}} - Q^{\star}_{M}\rVert_{\infty}
+ \lVert Q^{\star}_{M} - \hat{Q}\rVert_{\infty}$. Corollary~\ref{cor:repetitionlosswithq}
upper-bounds 
$\lVert Q^{\star}_{M} - Q^{\star}_{M_{d}} \Vert_{\infty}$ by $\Delta_{M} C_{3}(\gamma, d)$. As for $\lVert Q^{\star}_{M} - \hat{Q}\rVert_{\infty}$, observe that it only depends on $M$ and $\hat{Q}$. In aggregate, we have
\begin{align*}
\lVert V^{\star}_{M} - V^{\hat{\pi_{d}}}_{M} \rVert_{\infty}
&\leq 
\lVert V^{\star}_{M} - V^{\star}_{M_{d}} \rVert_{\infty} +
\lVert V^{\star}_{M_{d}} - V^{\hat{\pi}_{d}}_{M} \rVert_{\infty}\\
&\leq
\Delta_{M} C_{3}(\gamma, d) + \frac{2 \left(\Delta_{M}C_{3}(\gamma, d) + 
\lVert Q^{\star}_{M} - \hat{Q}\rVert_{\infty}
\right) \gamma^{d}}{1 - \gamma^{d}}\\
&=
\Delta_{M} C_{1}(\gamma, d) + \frac{\gamma^{d}}{1 - \gamma^{d}}C_{2}(M, \hat{Q})
\end{align*}
for appropriately defined $C_{1}(\gamma, d)$ and $C_{2}(M, \hat{Q})$.
\end{proof}
While the first term in the bound increases with $d$, the second term decreases on account of the shortening horizon. The overall bound is likely to be minimised by intermediate values of $d$ especially when the price of inertia ($\Delta_{M}$) is small and the approximation error 
($\lVert Q^{\star}_{M} - \hat{Q}\rVert_{\infty}$) large. We observe exactly this trend in the pitted grid world environment when we have an agent learn using Q-learning (with 0.05-greedy exploration and a geometrically-annealed learning rate). As a crude form of function approximation, we constrain (randomly chosen) pairs of neighbouring states to share  the same Q-values. Observe from figures \ref{fig:diffvald} (lower panel) and \ref{fig:gammaqlearngridten} that indeed the best results are achieved when $d > 1$.

%The bound in Theorem~\ref{thm:repetitionloss} does not directly apply to episodic tasks with no discounting. For such tasks, a ``back of the envelope'' calculation that takes $\gamma$ to be the termination probability at each step gives an expected horizon (steps before termination) of $T = 1 / (1 - \gamma)$ and an upper bound of $ \Delta_{M} T (1 - 1 / d)$ as the aggregate loss from AR.

\section{Empirical Evaluation}
\label{sec:empiricalevaluation}

The pitted grid world was an ideal task to validate our theoretical results, since it allowed us to control the price of inertia and to benchmark learned behaviour against optimal values. In this section, we evaluate action-repetition on more realistic tasks, wherein the evaluation is completely empirical. Our experiments test methodological variations and demonstrate the need for action-repetition for learning in a new, challenging task.

\subsection{Acrobot}
\label{subsec:acrobot}

We begin with Acrobot, the classic control task consisting of two links and two joints (shown in Figure~\ref{fig:acrobot-snapshot}). The goal is to move the tip of the lower link above a given height, in the shortest time possible. Three actions are available at each step: leftward,  rightward, and zero torque. Our experiments use the OpenAI Gym~\cite{brockman2016openai} implementation of Acrobot, which takes 5 actions per second. States are represented as a tuple of six features:  $\cos(\theta_1)$, $\sin(\theta_1)$, $\cos(\theta_2)$, $\sin(\theta_2)$, $\dot{\theta_1}$, and $\dot{\theta_2}$, where $\theta_1$ and $\theta_2$ are the link angles. The start state in every episode is set up around the stable point: $\theta_1$,
$\dot{\theta_1}$,
$\theta_2$, and
$\dot{\theta_2}$ are sampled uniformly at random from $[-0.1, 0.1]$.  A reward of -1 is given each time step, and $0$ at termination. Although Acrobot is episodic and undiscounted, we expect that as with the pitted grid world, the essence of Theorem~\ref{thm:aggregatebound} will still apply. Note that with control at 5 Hz, Acrobot episodes can last up to 500 steps when actions are selected uniformly at random. 
%on Acrobot take roughly 
%\textcolor{blue}{499, (agent can take a maximum of 500 actions per episode)}
%steps to terminate when actions are selected uniformly at random.

\setlength{\tabcolsep}{4pt}
\begin{figure}[b]
    \centering
    \mbox{
    \subfloat[]{
    \includegraphics[height=0.12\textheight]{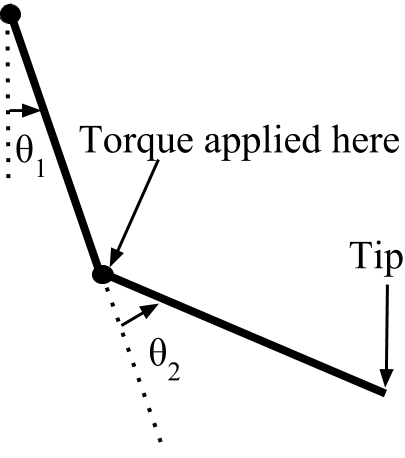}
    \label{fig:acrobot-snapshot}
    }
    \hspace{0.1cm}
    \subfloat[]{
    \includegraphics[height=0.15\textheight]{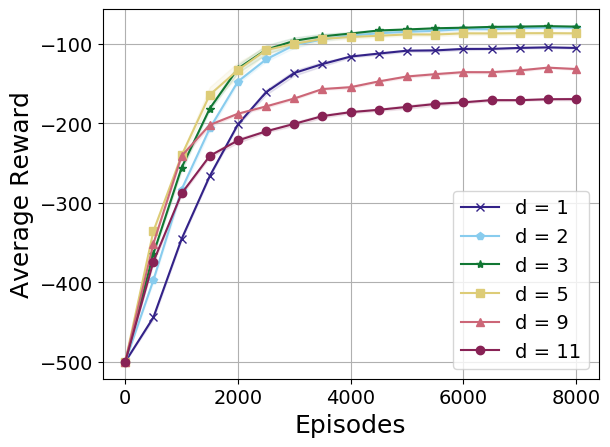}
    \label{fig:acrobot-sarsa}
    }
    }
    \hspace{0.1cm}
\raisebox{1.4cm}{    
    \subfloat[]{
\scriptsize
\begin{tabular}{
    |c|c|c|c|c|}
    \hline
$\gamma$ &$d = 1$ & $d = 2$ & $d = 3$ & $d = 5$\\ \hline
0.9 &$-123.1 (0.9)$ & $-346.6 (1.8)$ & $-310.5 (1.7)$ & $-270.4 (2.4)$\\
\hline
0.98&$-104.8 (0.5)$&$-79.7 (1.8)$&$-92.9 (6.2)$&$-104.6 (11.5)$ \\
\hline
0.99&$-\textbf{94.6} (3.6)$&$-\textbf{75.5} (0.7)$&$-74.1 (0.7)$&$-82.3 (1.2)$\\
\hline
0.999&$-95.8 (3.8)$&$-76 (0.8)$&$-\textbf{72.7} (0.8)$&$-81.4 (0.8)$\\
\hline
1&$-106.4 (3)$&$-76 (0.7)$&$-74.5 (1.2)$&$-\textbf{80} (0.2)$ \\
\hline
\end{tabular}
    \label{fig:acrobot-table}
\normalsize
}
    }
    
    \hspace*{-0.6cm}\mbox{
    \hspace*{0.4cm}\subfloat[]{
    \hspace*{-0.4cm}\includegraphics[height=0.15\textheight]{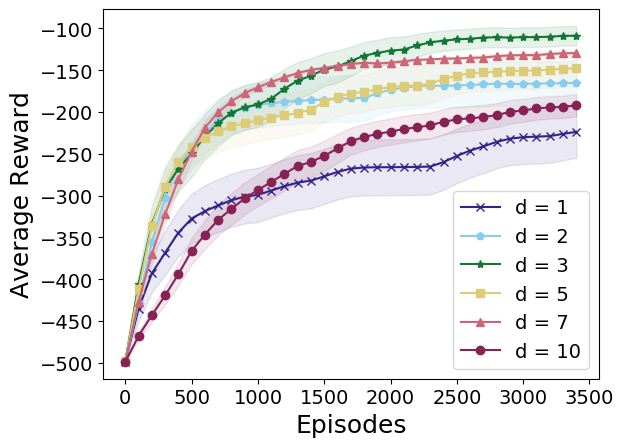}
    %\caption{Reinforce applied on Acrobot task with baseline. Shaded area represent Standard Error.}
    \label{fig:reinacroboterror}
    }
%    \subfloat[]{
%      \includegraphics[height=0.07\textheight]{archit-plots/covars_at_ep0.png}
        %  \caption{Gradient Co-variance Trace Plot for after 0 episodes training}
%      \label{fig:grad-covar-init}    
%    }
% \hspace{0.1cm}
    \hspace*{0.6cm}\subfloat[]{
      \hspace*{-0.6cm}\includegraphics[height=0.15\textheight]{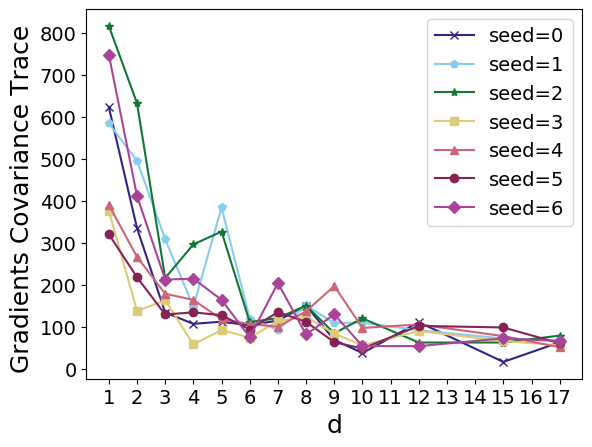}
        %  \caption{Gradient Co-variance Trace Plot for after 500 episodes training}
      \label{fig:grad-covar-med}    
    }

    }
\caption{(a) Screenshot of Acrobot. (b) Learning curves for Sarsa$_{d}$($\lambda$) with different frame-skip values $d$. (c) Episodic reward (and one standard error) obtained by Sarsa$_{d}$($\lambda$) after $8,000$ episodes of training with different $d$ and $\gamma$ combinations. (d) Learning curves for \textsc{reinforce} with different $d$ values. (e) For different $d$, an empirical estimate of the aggregate variance of $\nabla_{w}J(w)$ for $w$ found after $5,000$ episodes of training. For each seed, the policy found after 5,000 episodes of \textsc{reinforce} is frozen and run for $100$ transitions, each giving a sample gradient. The y axis shows the trace of the resulting covariance matrix.}
    \label{fig:acrobot}
\end{figure}

\begin{figure*}[t]
    \centering
    \mbox{
    \subfloat[]{
    \includegraphics[height=0.18\textheight]{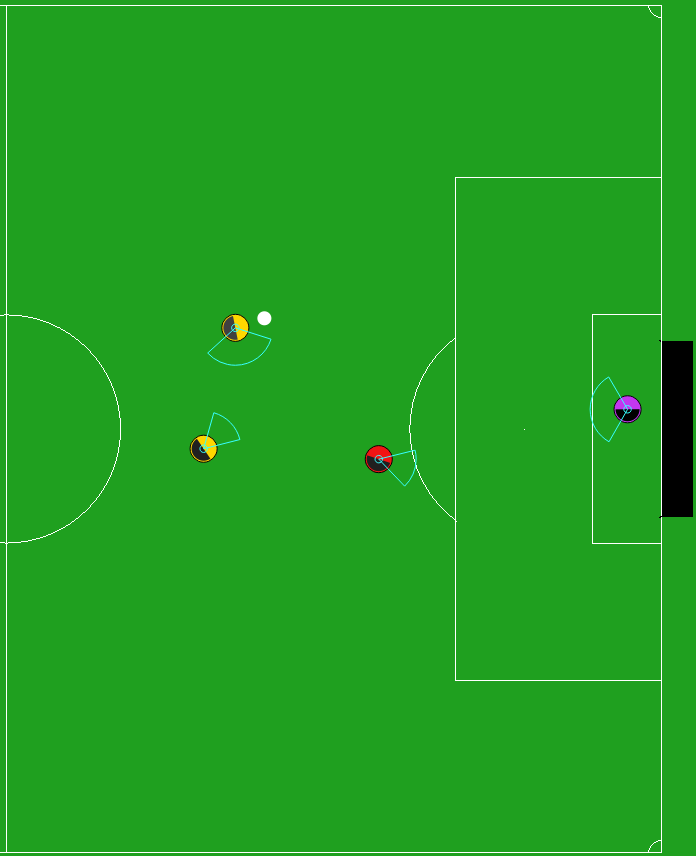}
    \label{fig:hfo-snapshot}
    }
     \hspace{0.8 cm}
    \subfloat[]{
    \includegraphics[height=0.2\textheight]{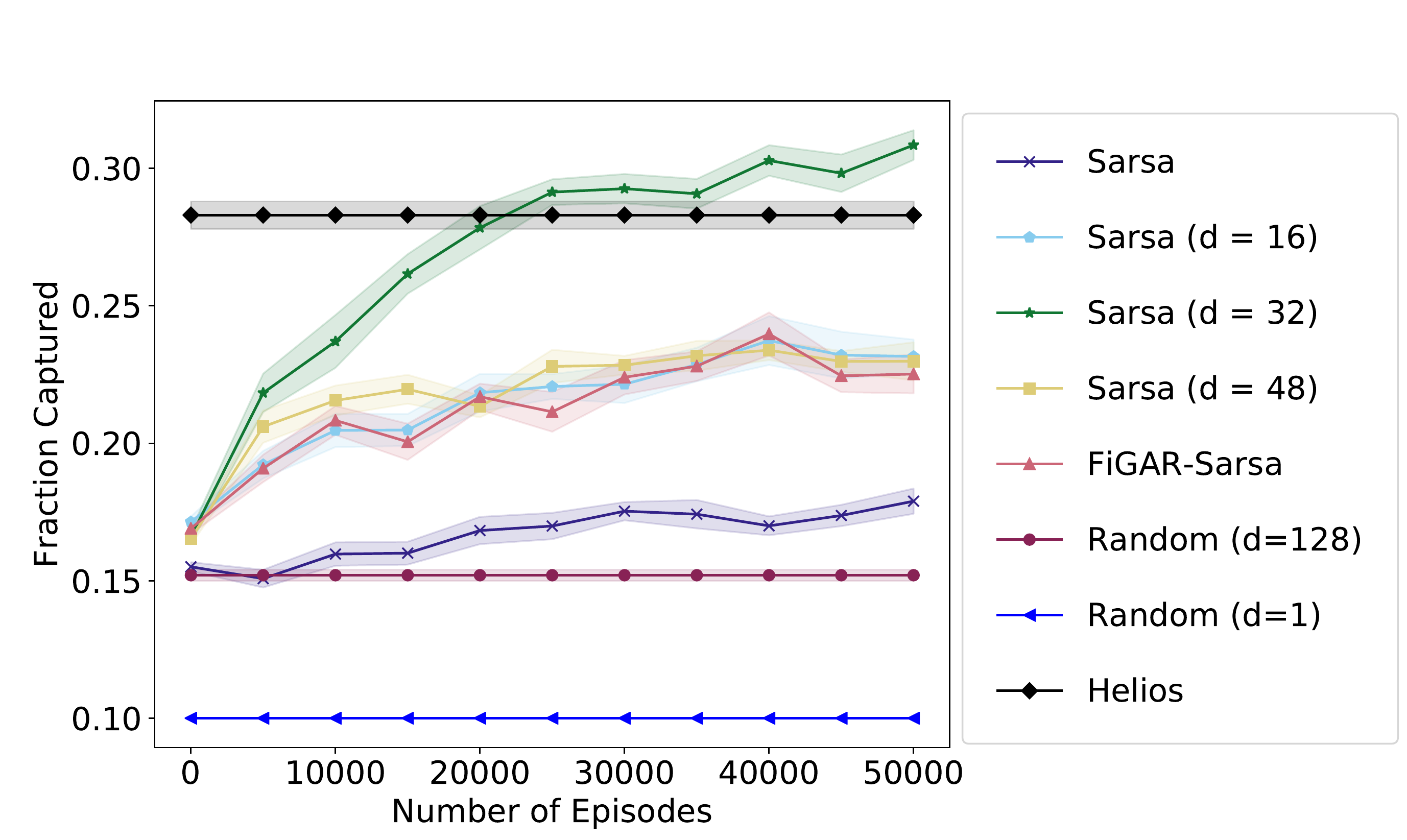}
    \label{fig:hfo-2v2}
    }
     \hspace{0.8 cm}
    \subfloat[]{
\includegraphics[height=0.18\textheight]{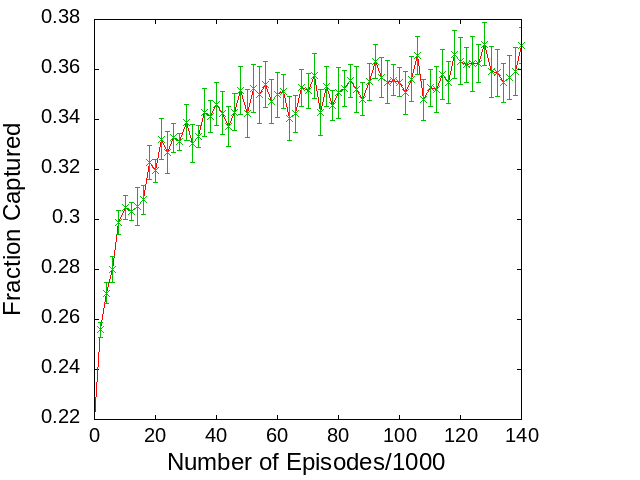}
\label{fig:hfo-exp3}
%\llap{\raisebox{3cm}{\includegraphics[width=0.2\textwidth]{plot_hist.png}} \\
\begin{picture}(0,0)
\put(-108,18){\includegraphics[width=0.15\textwidth]{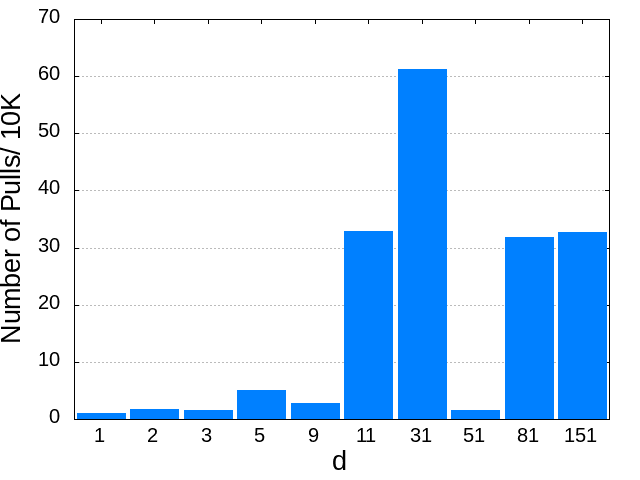}}
\end{picture}
    }
    }
    
% %    \subfloat[]{
% %    \includegraphics[height=0.2\textheight]{./figures/HFO_3v3.pdf}
% %    }
    
\caption{(a) Screenshot of 2v2 HFO. (b) Learning curves; ``Random'' and \textsc{helios} are static policies. (c) The learning curve for a ``meta'' learner that uses the EXP3.1 algorithm to switch between different values of $d$ (a single value is used within each episode). The inset shows the number of episodes (averaged from 10 independent runs) that each value of $d$ is invoked.}
    \label{fig:hfo}
\end{figure*}

We execute Sarsa$_{d}$($\lambda$), a straightforward generalisation of TD$_{d}$($\lambda$) to the control setting, using 1-dimensional tile coding~\cite[see Section 12.7]{Sutton+Barto:2018}.
Tuning other parameters to optimise results for $d = 1$, we set
$\lambda = 0.9$, 
%Optimising other parameters using a grid search
%We take $\lambda = 0.9$ (\textcolor{blue}{picked on what basis? : grid search on the set $\{0, .1, .2, .3, .4, .5, .6, .7, .8, .9, .95, .999\}$}),
%\textcolor{blue}{Did you do the grid search to optimise results for d = 1? : The grid search was done for all ds $\lambda=0.9$ gave the best results for d=1,2,3,5,9 and was the second best for d=11,31 }
$\alpha = 0.1$, and an initial exploration rate $\epsilon=0.1$, decayed by a factor of $0.999$ after each episode. Figure~\ref{fig:acrobot-sarsa} shows learning curves for different $d$ values. At 8,000 episodes, the best results are for $d = 3$; in fact Sarsa$_{d}$($\lambda$) with $d$ up to $5$ dominates Sarsa($\lambda$). It appears that Acrobot does not need control at 5 Hz; action-repetition shortens the task horizon and  enhances learning.\\

%When we compare policies learned by different algorithms after 8,000 episodes of learning, we find DI-Sarsa($\lambda, d$) with $\lambda=0.9,d = 3$ to be, by far, the most successful. The algorithm takes  $79.3 \pm 0.5$ steps to termination, much fewer than Sarsa($\lambda$) ($106.1 \pm 0.97$ with $\lambda = 0.9$), Sarsa($n$) ($114.5 \pm 2.5$ with $n=9$), and True Online Sarsa($\lambda$) ($105.1 \pm  1.61$ with $\lambda = 0.9$). Figure \ref{fig:acrobot} shows individual learning curves. Observe the initial improvement, and thereafter worsening, in performance as $d$ is increased, affirming the intuition we put forth in Section~\ref{sec:sardi}. Indeed the high frequency of action-selection in the simulator (5 Hz) leads to substiantial contiguity in actions picked by good policies, and so setting $d = 3$ does not preclude learning good policies. Happily, with only a third the number of updates to make, it greatly simplifies learning.

\noindent\textbf{Frame-skipping versus reducing discount factor.} If the key contribution of $d$ to successful learning is the reduction in horizon from $1/(1 - \gamma)$ to $1/(1 - \gamma^{d})$, a natural idea is to artificially reduce the task's discount factor $\gamma$, even without action-repetition. Indeed this approach has been found effective in conjunction with approximate value iteration~\cite{petrik2009biasing} and model-learning ~\cite{jiang2015dependence}. Figure~\ref{fig:acrobot-table} shows the values of policies learned by Sarsa$_{d}$($\lambda$) after $8,000$ episodes of training, when the discount factor $\gamma$ (originally $1$) is reduced. Other parameters are as before. As expected, some values of $\gamma < 1$ do improve learning. Setting $\gamma= 0.99$ helps the agent finish the task in $95.6$ steps: an improvement of $11.8$ steps over regular Sarsa($\lambda$). However, the configuration of $\gamma = 1, d = 5$ performs even better---implying that on this task, $d$ is more effective to tune than $\gamma$. Although decreasing $\gamma$ and increasing $d$ both have the effect of shrinking the horizon, the former has the consequence of revising the very definition of long-term reward. As apparent from Proposition~\ref{prop:valdistidentical}, $d$ entails no such change. That tuning these parameters in conjunction yields the best results (at $d = 3, \gamma = 0.999$) prompts future work to investigate their interaction. Interestingly, we find no benefit from using $\gamma < 1$ on the pitted grid world task.\\

\noindent\textbf{Action-repetition in policy gradient methods.} Noting that some of the recent successes of frame-skipping are on policy gradient methods~\cite{Sharma+LR:2017}, we run \textsc{reinforce}~\cite{Williams:1992} on Acrobot using action-repetition. Our controller computes an output for each action as a linear combination of the inputs, thereafter applying soft-max action-selection. The $21$ resulting weights $w$ (including biases) are updated by gradient ascent to optimise the episodic reward $J(w)$, using the Adam optimiser with initial learning rate $0.01$. We set $\gamma$ to $0.99$. Figure~\ref{fig:reinacroboterror} shows that yet again, performance is optimised at $d = 3$. Note that our implementation of \textsc{reinforce} performs baseline subtraction, which reduces the variance of the gradient estimate and improves results for $d = 1$. Even so, an empirical plot of the variance (Figure~\ref{fig:grad-covar-med}) shows that it falls further as $d$ is increased, with a relatively steep drop around $d = 3$. As yet, we do not have an analytical  explanation of this behaviour. Although known upper bounds on the variance of policy gradients~\cite{Zhao+HNS:2011} have a quadratic dependence on the task horizon, which is decreased by $d$ from $1/(1 - \gamma)$ to $1/(1 - \gamma^{d})$, they are also quadratic in the maximum reward, which is \textit{increased} by $d$ from $R_{\max}$ to $R_{\max} (1 + \gamma + \gamma^{2} + \dots + \gamma^{d - 1}) = R_{\max} (1 - \gamma^{d}) / (1 - \gamma)$ . We leave it to future work to explain the empirical observation of a significant reduction of the policy gradient variance with $d$ on Acrobot.

%We hypothesise that action-repetition reduces the variance of the policy gradient estimate, which is known to improve the convergence rate and the solution quality~\cite{Ng+Jordan:2000,peters2006policy}.

%We use a one layer neural network with input dimension being 6 and output dimension being 3 corresponding to the number of actions present to control the Acrobot. The following Gradient update rule is followed by collecting the rewards from the policy network. The learning rate is 0.01, using Adam as optimizer. $\gamma$ is set to $0.99$. Each step in the Acrobot gives -1 as reward except when the terminal state is reached in which case it is 0. We store the time averaged reward starting from -499 for each episode to judge the performance given by $RunningReward = RunningReward * 0.99 + 0.01 * (r)$ where r is the reward accumulated the agent till it reaches terminal state in an episode. We also experiment with the baseline version of the above algorithm in which.  The experiment is run for 32 distinct seeds. The reward is averaged and plotted. We also calculate the Standard Error given as $SE=\frac{\sigma}{\sqrt{n}}$ where $\sigma$ is the standard deviation and n is the number of seeds. The shaded region is $+SE$ and $-SE$ from the mean value. You can see the that the SE in case of DI=10 is much less compared to DI=1 which strengthens the argument that a higher DI leads to better learning. But the final performance peaks for an in-between DI due to strong constraint imposed on DI=10. In both with and without baseline the performance of DI=1 is worse than other DI.

\subsection{Action-repetition in new, complex domain} 
\label{subsec:action-repetitioninnewcomplexdomain}

Before wrapping up, we share our experience of implementing action-repetition in a new, relatively complex domain. We expect practitioners to confront similar design choices in other tasks, too.

The Half Field Offense (HFO) environment~\cite{ALA16-hausknecht} models a game of soccer in which an offense team aims to score against a defense team, when playing on one half of a soccer field (Figure \ref{fig:hfo-snapshot}). While previous investigations in this domain have predominantly focused on learning successful \textit{offense} behaviour, we address the task of learning \textit{defense}. Our rationale is that successful defense must anyway have extended sequences of actions such as approaching the ball and marking a player. Note that in 2 versus 2 (2v2) HFO, the average number of decisions made in an episode is roughly 8 for offense, and 100 for defense. We implement four high-level defense actions: \textsc{mark\_player}, \textsc{reduce\_angle\_to\_goal}, \textsc{go\_to\_ball}, and \textsc{defend\_goal}. The continuous state space is represented by $16$ features such as distances and positions~\cite{ALA16-hausknecht}. Episodes yield a single reward at the end: 1 for no goal and 0 for goal. No discounting is used. As before, we run Sarsa$_{d}$($\lambda$) with 1-dimensional tile coding. 

%Whereas on-line TD methods such as Sarsa have been very successful in learning offense (or similar) behaviour in HFO and in the related task of Keepaway~\cite{stone2000reinforcement}, we are unaware of any successful application to defense. For improving defense in a related 2D simulation setting, \cite{gabel2008case} \cite{gabel2008case} use batch learning, which tends to be more stable. \cite{kyrylov2007while} \cite{kyrylov2007while} propose an approach based on planning, rather than learning. For offense, the crux of decision making is when a player has possession of the ball. For defense, however, a player has to constantly evaluate and pick an appropriate action. 

%We ran experiments on the HFO environment \cite{ALA16-hausknecht} by training defense agents in 2v2 and 3v3 scenarios. 

In the 2v2 scenario, we train one defense agent, while using built-in agents for the goalkeeper and offense. Consistent with earlier studies~\cite{durugkar2016deep,McGovern+SF:1997}, we observe that action-repetition assists in exploration. With $d = 1$, random action-selection succeeds on only $10\%$ of the episodes; the success rate increases with $d$, reaching $15\%$ for $d = 128$. Figure~\ref{fig:hfo-2v2} shows learning curves: points are shown after every 5,000 training episodes, obtained by evaluating learned policies for 2,000 episodes. All algorithms use $\alpha = 0.1, \epsilon = 0.01, \lambda = 0.8$ (optimised for
Sarsa$_{1}$($\lambda$) at 50,000 episodes). Action-repetition shows a dramatic effect on Sarsa, which only registers a modest improvement over random behaviour with $d = 1$, but with $d = 32$, even outperforms a defender from the \textsc{helios} team
\cite{akiyama2018helios2018} that won the RoboCup competition in 2010 and 2012.\\ %Even if \textsc{helios} is not optimised for HFO, it is remarkable that a learning approach can surpass it. Clearly, AR plays a defining role in the success of Sarsa on this task.

\noindent\textbf{Optimising $d$.} A natural question arising from our observations is whether we can tune $d$ ``on-line'', based on the agent's experience. We obtain mixed results from investigating this possibility. In one approach, we augment the atomic set of actions with extended sequences;
%in another we allow the agent to learn separate $d$ values for different state-action pairs;
in
%yet
another we impose a penalty on the agent every time it switches actions. Neither of these approaches yields any appreciable benefit. The one technique that does show a rising learning curve, included in Figure~\ref{fig:hfo-2v2}, is FiGAR-Sarsa, under which we associate both action and $d$ (picked from $\{1, 2, 4, 8, 16, 32, 64\}$) with state, and update each $Q$-value independently. However, at 50,000 episodes of training, this method still trails Sarsa$_{d}$($\lambda$) with (static) $d = 32$ by a significant margin.

Observe that the methods described above all allow the agent to adapt $d$ \textit{within} each learning episode. On the other hand, the reported successes of tuning $d$ on Atari games~\cite{Lakshminarayanan+SR:2017,Sharma+LR:2017} are based on policy gradient methods, in which a fixed policy is executed in each episode (and updated between episodes). In line with this approach, we design an outer loop that treats each value of $d$ (from a finite set) as an \textit{arm} of a multi-armed bandit. A full episode, with Sarsa$_{d}$($\lambda$) updates using the corresponding, fixed frame-skip $d$ is played out on every pull. The state of each arm is saved between its pulls (but no data is shared between arms). Since we cannot make the standard ``stochastic'' assumption here, we use the EXP3.1 algorithm~\cite{auer2002nonstochastic}, which maximises expected payoff in the adversarial setting. Under EXP3.1, arms are sampled according to a probability distribution, which gets updated whenever an arm is sampled. Figure~\ref{fig:hfo-exp3} shows a learning curve corresponding to this meta-algorithm (based on a moving average of 500 episodes); we set $\lambda=0.9375$ for the best overall results. It is apparent from the curve and affirmed by the inset that Exp3.1 is quick to identify $d = 31$ as the best among the given choices ($d = 81$ and $d = 151$ are also picked many times due to their quick convergence, even if to suboptimal performance).

\section{Conclusion}
\label{sec:conclusion}

In this paper, we analyse frame-skipping a, simple approach that has recently shown much promise in applications of RL, and is especially relevant as technology continues to drive up frame rates and clock speeds. In the prediction setting, we establish that frame-skipping retains the consistency of prediction. In the control setting, we provide both theoretical and empirical justification for action-repetition, which applies the principle that tasks anyway having gradual changes of state can benefit from a shortening of the horizon. Indeed action-repetition allows TD learning to succeed on the defense variant of HFO, a hitherto less-studied aspect of the game. Although we are able to automatically tune the frame-skip parameter $d$ using an outer loop, it would be interesting to examine how the same can be achieved within each episode.\\

%While our results firmly establish $d$ as a relevant \textit{hyperparameter} in RL tasks, it remains unclear how best to learn $d$ from experience. It is interesting to consider a framework in which larger values of $d$ are used while learning, but $d$ is reduced (even set to $1$) during execution, allowing the agent to become more responsive. Informal experiments on Acrobot and HFO show  mixed results from this approach, which needs more attention.

%---which again emphasises the need for more attention on  the ``variable $d$'' regime. Yet another interesting possibility for future consideration is the effect of AR in multi-agent tasks, wherein joint action spaces are large, and there is a pressing need for coordination. Note that our experiment with 2v2 HFO only involved single-agent learning. Informal experiments with 3v3 HFO, in which two defense agents are trained, also suggest gains from AR, but are yet unable to match the performance level of \textsc{helios} agents.

\begin{comment}
\begin{figure}

\includegraphics[width=0.4\textwidth]{figures/plot_HFOexp3.png}\label{fig:hfo-exp3}

%\llap{\raisebox{3cm}{\includegraphics[width=0.2\textwidth]{plot_hist.png}}} \\

\begin{picture}(0,0)
\put(-38,40){\includegraphics[width=0.2\textwidth]{figures/plot_hist.png}}
\end{picture}
\caption{ The learning plot for a ``meta'' learner that uses the EXP3.1 algorithm to switch between different values of $d$. The inset shows the number of episodes from 10 independent runs (until a particular value of $d$ is picked 150,000 times) that each value of $d$ is invoked.}
\label{fig:hfo-exp3}
\end{figure}

\end{comment}

\bibliographystyle{unsrt}
\bibliography{references-1}

\newpage
\appendix

\section{Price of Inertia for Deterministic MDPs with Reversible Transitions}
\label{app:priceofinertiadeterministicmdps}
Consider a deterministic MDP $M$ in which transitions can be ``reversed'': in other words, for $s, s^{\prime} \in S, a \in A$, if taking $a$ from $s$ leads to $s^{\prime}$, then there exists an action $a^{c}$ such that taking 
$a^{c}$ from $s^{\prime}$ leads to $s$. Now suppose action $a$ carries the agent from $s$ to $s^{\prime}$, and thereafter from $s^{\prime}$ to $s^{\prime\prime}$. We have:
\begin{align*}
&Q^{\star}_{M}(s, a) - Q^{\star}_{M}(s, a^{2})\\
&= \gamma V^{\star}_{M}(s^{\prime}) - \gamma Q^{\star}_{M}(s^{\prime}, a)\\
&= \gamma V^{\star}_{M}(s^{\prime}) - \gamma \{R(s^{\prime}, a,) + \gamma V^{\star}_{M}(s^{\prime\prime})\}\\
&\leq \gamma V^{\star}_{M}(s^{\prime}) - \gamma \{R(s^{\prime}, a) + \gamma Q^{\star}_{M}(s^{\prime\prime}, a^{\text{c}})\}\\
&= \gamma V^{\star}_{M}(s^{\prime}) - \gamma \{R(s^{\prime}, a) + \gamma (R(s^{\prime\prime}, a^{c}) + \gamma V^{\star}_{M}(s^{\prime})) \}\\
&= \gamma V^{\star}_{M}(s^{\prime}) (1 - \gamma^{2}) - \gamma R(s^{\prime}, a) - \gamma^{2} R(s^{\prime\prime}, a^{c})\\
&\leq \gamma \frac{R_{\max}}{1 - \gamma} (1 - \gamma^{2}) + \gamma R_{\max} + \gamma^{2} R_{\max}\\
&= 2 \gamma (1 + \gamma) R_{\max}.
\end{align*}
Since $\gamma \leq 1$, it follows that $\Delta_{M}(s, a) = \max_{s \in S, a \in A} (Q^{\star}_{M}(s, a) - Q^{\star}_{M}(s, a^{2}))$ is at most $4R_{\max}$.

\section{Proof of Proposition~\ref{cor:repetitionlosswithq}}
\label{app:proofofq}

\begin{comment}
Q*_{Md}(s, a) = Rd(s, a) + gamma^{d} sum_{s’} Td(s, a, s’) V*_{Md}(s’)
>= Rd(s, a) + gamma^{d} sum_{s’} TD(s, a, s’) V*_{M}(s’) - gamma^{d} DeltaMGDHD
                       = Q*_{M}(s, a^{d}) - gamma^{d} DELTAMGDHD
                       >= Q*M(s, a) - DELTAMGDHD
                       = needed?

\end{comment}

The following bound holds for all $s \in S, a \in A$. The first ``$\geq$'' step follows from Lemma~\ref{lem:repetitionloss} and the second such step is based on an application of
\eqref{eqn:qstarbound}.

\begin{align*}
Q^{\star}_{M_{d}}(s, a)
&= R_{d}(s, a) + \gamma^{d} \sum_{s^{\prime} \in S} T_{d}(s, a, s^{\prime}) V^{\star}_{M_{d}}(s^{\prime})\\
&\geq 
R_{d}(s, a) + \gamma^{d} \sum_{s^{\prime} \in S} T_{d}(s, a, s^{\prime}) V^{\star}_{M}(s^{\prime}) - \gamma^{d} \Delta_{M} G_{d} H_{d, \infty}\\
&=
Q^{\star}_{M}(s, a^{d}) - \gamma^{d} \Delta_{M} G_{d} H_{d, \infty}\\
&\geq
Q^{\star}_{M}(s, a) - \Delta_{M}G_{d} -  \gamma^{d} \Delta_{M} G_{d} H_{d, \infty}\\
&=
Q^{\star}_{M}(s, a) - \Delta_{M} \frac{1 - \gamma^{d - 1}}{(1 - \gamma)(1 - \gamma^{d})}.
\end{align*}

\section{Proof of Proposition~\ref{prop:lowerbound}}
\label{app:prooflowerbound}

The figure below shows an MDP $M$ with states $1, 2, \dots, d$, and actions \textsc{stay} (dashed) and \textsc{move} (solid). All transitions are deterministic, and shown by arrows labeled with rewards. The positive reward $x$ is set to $\Delta / \gamma$.

\begin{figure}[h!]
\centering
\begin{tikzpicture}[node distance=2cm,auto,scale=0.6,transform shape,->]
\large
    \tikzset{state/.style={circle,draw=black,minimum size=10mm,thick}}
    \node[state] at (0, 0) (s1){$1$};
    \node[state] at (2, 0) (s2){$2$};
    \node[state] at (4, 0) (s3){$3$};
    \node[state] at (9, 0) (sd){$d$};
    \draw(s1) edge [thick, dashed, loop above] node {$0$}  (s1);
    \draw(s1) edge [thick] node[below] {$x$}  (s2);
    \draw(s2) edge [thick, dashed, loop above] node {$x$}  (s2);
    \draw(s2) edge [thick] node[below] {$0$}  (s3);
    \draw(s3) edge [thick, dashed, loop above] node {$x$}  (s3);
    \draw(s3) edge [thick] node[below] {$0$}  (5.5, 0);
    \node[] at (6.5,0) {$\dots$};
    \draw(7.5, 0) edge [thick] node[below] {$0$}  (sd);
    \draw(sd) edge [thick, dashed, loop above] node {$x$}  (sd);
    \draw(sd) edge [thick, bend left=30] node[below] {$0$}  (s1);
\normalsize
  \end{tikzpicture}
%\caption{MDP $M$ with states $1, 2, \dots, d$, and actions \textsc{stay} (dashed) and \textsc{move} (solid). All transitions are deterministic, and shown by arrows labeled with rewards. The positive reward $x$ is set to $\Delta / \gamma$. It can be verified that $\Delta_{M} = \Delta_{M}(1, \textsc{move}) = \Delta$, and also that $\lVert V^{\star}_{M} - V^{\star}_{M_{d}} \rVert_{\infty} = V^{\star}_{M}(1) - V^{\star}_{M_{d}}(1) = \Delta_{M} \frac{1 - \gamma^{d - 1}}{(1 - \gamma)(1 - \gamma^{d})}$.}
%\label{fig:lowerboundMDP}
\end{figure}

It can be verified that $\Delta_{M} = Q^{\star}_{M}(1, \text{\textsc{move}})
- Q^{\star}_{M}(1, \text{\textsc{move}}^{2})
 = \Delta$, and also that 
\begin{align*}
\lVert V^{\star}_{M} - V^{\star}_{M_{d}} \rVert_{\infty}
&= V^{\star}_{M}(1) - V^{\star}_{M_{d}}(1)\\
&= Q^{\star}_{M}(1, \textsc{1, move}) - Q^{\star}_{M_{d}}(1, \textsc{move})\\
&= \lVert Q^{\star}_{M} - Q^{\star}_{M_{d}} \rVert_{\infty}\\
&= 
\Delta_{M} \frac{1 - \gamma^{d - 1}}{(1 - \gamma)(1 - \gamma^{d})},
\end{align*} 
which matches the upper bound in Lemma~\ref{lem:repetitionloss}.

\section{Proof of Lemma~\ref{lem:approximategreedyloss}}
\label{app:proofoflemmaapprox}

We furnish the relatively simple proof below, while noting that many similar results (upper-bounds on the loss from greedy action selection) are provided by Singh and Yee~\cite{singh1994upper}.

For $s \in S$ and $j \geq 0$, let $U_{j}(s)$ denote the expected long-term discounted reward obtained by starting at $s$, following $\hat{\pi}$ for $j$ steps, and thereafter following an optimal policy $\pi^{\star}$. Our induction hypothesis is that $U_{j}(s) \geq V^{\star}_{M}(s) - 2 \epsilon \sum_{k = 1}^{j} \gamma^{k}$. As base case, it is clear that $U_{0}(s) = V^{\star}_{M}(s)$. Assuming the induction hypothesis to be true for $j$, we prove it for $j + 1$. We use the fact that $\hat{Q}$ is an $\epsilon$-approximation of $Q^{\star}_{M}$, and also that $\hat{\pi}$ is greedy with respect to $\hat{Q}$. For $s \in S$,

\begin{align*}
&U_{j + 1}(s)\\
&= R(s, \hat{\pi}(s)) + \gamma \sum_{s^{\prime}} T(s, \hat{\pi}(s), s^{\prime}) U_{j}(s^{\prime})\\
&\geq R(s, \hat{\pi}(s)) + \gamma \sum_{s^{\prime}} T(s, \hat{\pi}(s), s^{\prime}) V^{\star}_{M} (s^{\prime}) - 2 \epsilon \sum_{k = 1}^{j} \gamma^{k + 1}\\
&\geq R(s, \hat{\pi}(s)) + \gamma \sum_{s^{\prime}} T(s, \hat{\pi}(s), s^{\prime}) Q^{\star}_{M} (s^{\prime}, \hat{\pi}(s^{\prime})) - 2 \epsilon \sum_{k = 1}^{j} \gamma^{k + 1}\\
&\geq R(s, \hat{\pi}(s)) + \gamma \sum_{s^{\prime}} T(s, \hat{\pi}(s), s^{\prime}) \hat{Q}(s^{\prime}, \hat{\pi}(s^{\prime})) - \gamma \epsilon - 2 \epsilon \sum_{k = 1}^{j} \gamma^{k + 1}\\
&\geq R(s, \pi^{\star}(s)) + \gamma \sum_{s^{\prime}} T(s, \pi^{\star}(s), s^{\prime}) \hat{Q}(s^{\prime}, \pi^{\star}(s^{\prime})) - \gamma \epsilon - 2 \epsilon \sum_{k = 1}^{j} \gamma^{k + 1}\\
&\geq R(s, \pi^{\star}(s)) + \gamma \sum_{s^{\prime}} T(s, \pi^{\star}(s), s^{\prime}) Q^{\star}_{M}(s^{\prime}, \pi^{\star}(s^{\prime})) - 2 \gamma \epsilon - 2 \epsilon \sum_{k = 1}^{j} \gamma^{k + 1}\\
&= V^{\star}_{M}(s) - 2 \epsilon \sum_{k = 1}^{j + 1} \gamma^{k}.
\end{align*}
Since $\lim_{j \to \infty} U_{j}(s) = V^{\hat{\pi}}_{M}(s)$, we have $V^{\hat{\pi}}_{M}(s) \geq V^{\star}_{M}(s) - \frac{2 \epsilon \gamma}{1 - \gamma}$.

\end{document}